\theoremstyle{plain}
\newtheorem{theorem}{Theorem}
\newtheorem{assumption}{Assumption}
\newtheorem*{theorem*}{Theorem}
\newtheorem*{lemma*}{Lemma}
\def\cS{{\mathcal{S}}}
\def\cA{{\mathcal{A}}}
\def\cZ{{\mathcal{Z}}}
\def\EE{{\mathbb{E}}}
\def\RR{{\mathbb{R}}}
\newcommand{\indep}{\perp \!\!\! \perp}
\newtheoremstyle{questionstyle}
  {\topsep}   % ABOVESPACE
  {0}         % BELOWSPACE
  {\itshape}  % BODYFONT
  {0pt}       % INDENT (empty value is the same as 0pt)
  {\bfseries} % HEADFONT
  {.}         % HEADPUNCT
  {5pt plus 1pt minus 1pt} % HEADSPACE
  {}          % CUSTOM-HEAD-SPEC
\theoremstyle{questionstyle}\newtheorem{question}{Question}
\icmltitlerunning{Behavior Contrastive Learning for Unsupervised Skill Discovery}
\begin{document}

\twocolumn[
\icmltitle{Behavior Contrastive Learning for Unsupervised Skill Discovery
% in Reinforcement Learning
}

% It is OKAY to include author information, even for blind
% submissions: the style file will automatically remove it for you
% unless you've provided the [accepted] option to the icml2022
% package.

% List of affiliations: The first argument should be a (short)
% identifier you will use later to specify author affiliations
% Academic affiliations should list Department, University, City, Region, Country
% Industry affiliations should list Company, City, Region, Country

% You can specify symbols, otherwise they are numbered in order.
% Ideally, you should not use this facility. Affiliations will be numbered
% in order of appearance and this is the preferred way.
% \icmlsetsymbol{corr}{*}

\begin{icmlauthorlist}
\icmlauthor{Rushuai Yang}{shlab,hit}
\icmlauthor{Chenjia Bai}{shlab}
\icmlauthor{Hongyi Guo}{nw}
\icmlauthor{Siyuan Li}{hit}
\icmlauthor{Bin Zhao}{shlab,nwpu}
\icmlauthor{Zhen Wang}{nwpu}
\icmlauthor{Peng Liu}{hit}
\icmlauthor{Xuelong Li}{shlab,nwpu}
\end{icmlauthorlist}

\icmlaffiliation{shlab}{Shanghai Artificial Intelligence Laboratory, China}
\icmlaffiliation{hit}{Harbin Institute of Technology, China}
\icmlaffiliation{nw}{Northwestern University, USA}
\icmlaffiliation{nwpu}{Northwestern Polytechnical University, China}

\icmlcorrespondingauthor{Chenjia Bai}{baichenjia@pjlab.org.cn}
% \icmlcorrespondingauthor{Xuelong Li}{li@nwpu.edu.cn}

% \footnote{The work was done during the internship of Rushuang Yang at Shanghai AI Laboratory.}

% You may provide any keywords that you
% find helpful for describing your paper; these are used to populate
% the "keywords" metadata in the PDF but will not be shown in the document
\icmlkeywords{Machine Learning, ICML}

\vskip 0.3in
]

% this must go after the closing bracket ] following \twocolumn[ ...

% This command actually creates the footnote in the first column
% listing the affiliations and the copyright notice.
% The command takes one argument, which is text to display at the start of the footnote.
% The \icmlEqualContribution command is standard text for equal contribution.
% Remove it (just {}) if you do not need this facility.

% \printAffiliationsAndNotice{\icmlEqualContribution}

\printAffiliationsAndNotice

% leave blank if no need to mention equal contribution
% \printAffiliationsAndNotice{\icmlEqualContribution} % otherwise use the standard text.

\begin{abstract}
In reinforcement learning, unsupervised skill discovery aims to learn diverse skills without extrinsic rewards. Previous methods discover skills by maximizing the mutual information (MI) between states and skills. However, such an MI objective tends to learn simple and static skills and may hinder exploration. In this paper, we propose a novel unsupervised skill discovery method through contrastive learning among behaviors, which makes the agent produce similar behaviors for the same skill and diverse behaviors for different skills. Under mild assumptions, our objective maximizes the MI between different behaviors based on the same skill, which serves as an upper bound of the previous MI objective. Meanwhile, our method implicitly increases the state entropy to obtain better state coverage. We evaluate our method on challenging mazes and continuous control tasks. The results show that our method generates diverse and far-reaching skills, and also obtains competitive performance in downstream tasks compared to the state-of-the-art methods.
\end{abstract}

\section{Introduction}

Reinforcement Learning (RL) \cite{RLBook-2018} shows promising performance in a variety of challenging tasks, including game playing \cite{atari,Go}, quadrupedal locomotion \cite{quad-science-2020,quad-science-2022}, and robotic manipulation \cite{MT-OPT,daydreamer}. In most tasks, the policy is trained by optimizing a specific reward function with RL algorithms. One limitation of such a framework is that the learned policy is restricted to the training task and cannot generalize to other downstream tasks \cite{Procgen}. In contrast, humans can learn diverse and meaningful skills by exploring the environment without specific rewards, and then use these discovered skills to solve complex downstream tasks. Inspired by the human intelligence, unsupervised RL \cite{URLB} has recently become an important tool to address the generalization problem through skill discovery. 

Existing methods perform unsupervised skill discovery by maximizing the MI between states and skills. Specifically, the agent uses the discriminability and diversity of skill behaviors as an intrinsic reward, and then trains a skill-conditional policy by maximizing such a reward \cite{vic}. The discovered skills can serve as primitives to solve downstream tasks. Although these methods have shown promising results in discovering useful skills, a key limitation is that they often learn simple and static skills that lead to poor state coverage, which has also been observed in recent works \cite{EDL-2020,recurrent-2022}. For instance, in locomotion tasks, the MI objective tends to learn static `posing' skills rather than dynamic skills. We find it is mainly caused by the much smaller skill label space compared to the state space, which makes the MI objective easily achieve its maximum even with static skills. Specifically, the states visited by different skills can only have slight differences for distinguishing skills but not necessarily learn semantically meaningful or far-reaching skills. Increasing skill dimensions \cite{cic} or enforcing exploration \cite{liu2021aps,EDL-2020,LSD-2022} partially addresses this problem, while they require additional estimator or training techniques. 

In this study, we propose a novel method for unsupervised skill discovery, named \underline{Be}havior \underline{C}ontrastive \underline{L}earning (BeCL). We consider skill discovery from a multi-view perspective, where different trajectories condition on the same skill are different views. Specifically, we use the MI between different states generated by the same skill as an intrinsic rewards and train a policy to maximize it. Intuitively, BeCL encourages the agent to perform similarly condition on the same skill, and performs diversely among different skills. Under the redundancy condition, our MI objective serves as an upper bound of the previous MI objective, which means that BeCL also learns discriminating skills implicitly. Moreover, our method implicitly increases the state visitation entropy to encourage a better coverage of the environment, which prevents the agent from learning static skills. To tackle the MI estimation problem in high-dimensional control tasks, we adopt a contrastive learning algorithm to estimate the MI in BeCL by sampling positive and negative states from different skills. 

The contribution can be summarized as follows. (\romannumeral1) We propose a novel MI objective that performs unsupervised skill discovery and entropy-driven exploration simultaneously. (\romannumeral2) We propose BeCL as a practical contrastive method to approximate our objective in high-dimensional tasks. (\romannumeral3)~We discuss the connection and difference between BeCL and previous methods from an information-theoretical perspective. (\romannumeral4) We evaluate our method on maze tasks and Unsupervised RL Benchmark (URLB) \cite{URLB}. The result shows that BeCL learns diverse and far-reaching skills, and also demonstrates competitive performance in downstream tasks compared to the state-of-the-art methods. The open-sourced code is available at \url{https://github.com/Rooshy-yang/BeCL}.

\section{Preliminaries}

We consider a Markov Decision Process (MDP) defined as $(\cS, \cA, P, r, \gamma, \rho_0)$, where $\cS$ is the state space, $\cA$ is the action space, $P(s'|s,a)$ is the transition function, $\gamma$ is the discount factor, and $\rho_0: \cS\rightarrow [0,1]$ is the initial state distribution. For unsupervised skill discovery, the agent interacts with the environment without specific reward functions and $r$ denotes some intrinsic rewards. We denote the skill space by $\cZ$ and the skill $z\sim \cZ$ can be discrete or continuous vector. In each timestep $t$, 
the agent takes an action $a_t\sim\pi(a|s_t,z)$ by following the skill-conditional policy with a specific skill $z$. We denote the normalized probability that a policy $\pi$ encounters state $s$ as $\rho_\pi(s) \triangleq (1-\gamma)\sum_{t=0}^\infty \gamma^t \mathrm{P}_{t}^\pi(s)$.
% an agent observes the current state $s_t$, and then obtains the reward function $r_t$ and observe the next-state $s_{t+1}$. 

During the unsupervised training stage, the policy $\pi(a|s,z)$ is learned by maximizing the discounted cumulative intrinsic reward as $\sum_{t=0}^{T-1} \gamma^t r_t$. After training, we adapt the training skill to downstream tasks that have extrinsic reward functions $\{r^{\rm task}\}$. As suggested by URLB \cite{URLB}, we can choose a skill vector $z^\star$ and initialize the policy of the downstream task as $\pi(a|s,z^\star)$. Then we finetune the policy for a small number of interactions by optimizing a task-specific reward function $r^{\rm task}$ and measure the adaptation performance. Other metrics like data diversity and zero-shot transfer can also evaluate the quality of skills while they are less common than the adaptation efficiency.

In the following, we denote the information measure $I(\cdot;\cdot)$ as MI and $H(\cdot)$ as Shannon entropy. Previous skill discovery algorithms sample states from $\pi(a|s,z)$ and then maximize the MI objective $I(S;Z)$ using variational approximators, where $S$ and $Z$ denote random variables. The estimation of $I(S;Z)$ is used as an intrinsic reward to learn the policy $\pi(a|s,z)$. For example, DIAYN \cite{diayn} use a discriminator-based MI estimator as \begin{equation}\nonumber
I(S;Z)=H(Z)-H(Z|S)\geq H(Z)+\EE_{p(s,z)}[\log q_\phi(z|s)],
\end{equation}
where $q_\phi(z|s)$ is a discriminator of skills. With this discriminator, the intrinsic reward for skill discovery is set to $r=\log q_\phi(z|s) - \log p(z)$. Other variants also decompose $I(S;Z)$ to other forms or replacing $S$ by other transition information, while the initial MI objectives are similar. 

\section{The BeCL Method}

In this section, we first illustrate the shortcomings of existing skill discovery methods from an information-theoretical perspective. Then we propose a new MI objective for BeCL and give a practical approximation of the objective via contrastive learning. Finally, we give analyses to show the advantage of the BeCL algorithm in state coverage. 

\begin{figure}[t]
\begin{center}
\centerline{
\includegraphics[width=0.9\columnwidth]{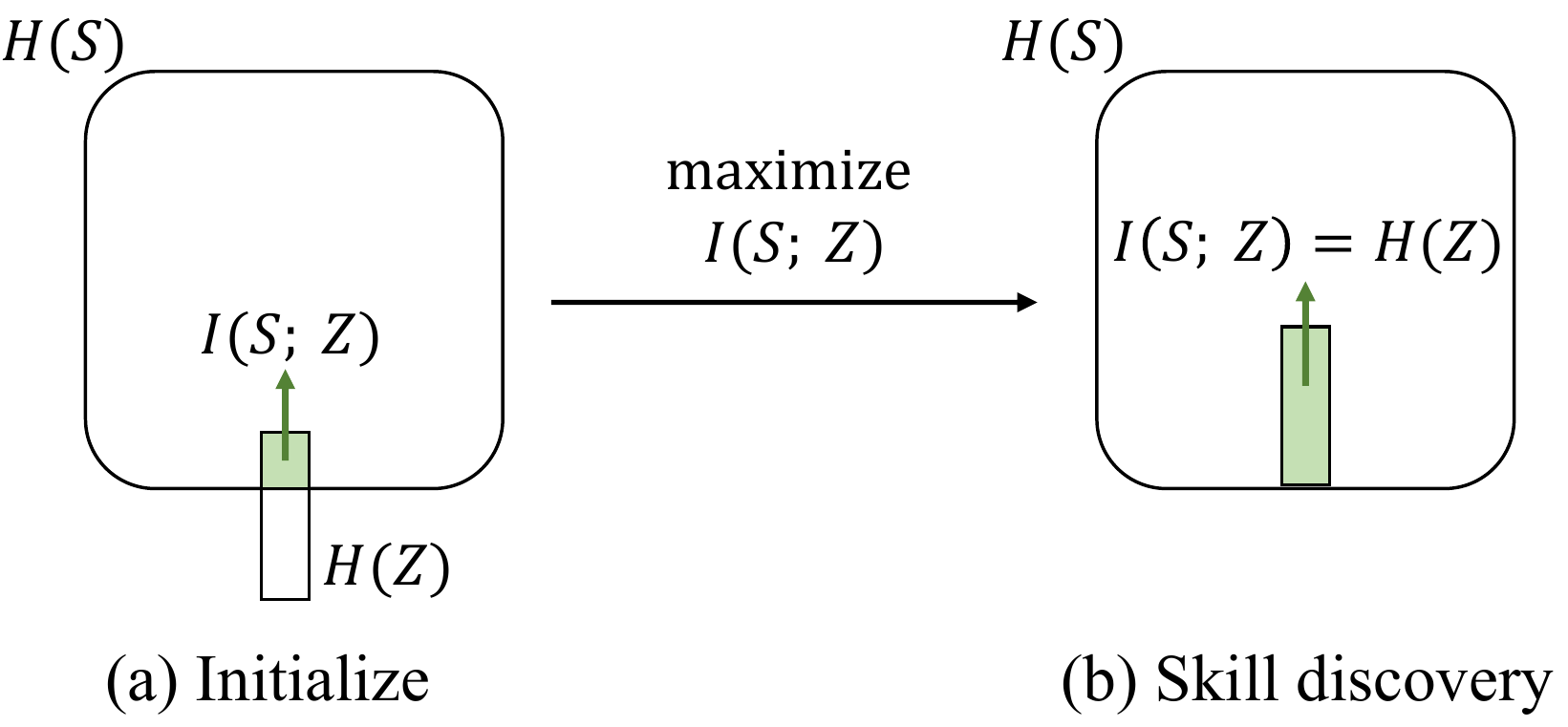}}
\vspace{-0.5em}
\caption{Information diagrams for maximizing $I(S;Z)$. (a)~The information shared between the states and skills (i.e., $I(S;Z)$) is less than $H(Z)$. (b) By maximizing the variational bound, the MI objective is maximized and we have $I(S;Z)=H(Z)$.}
\label{fig:mi-of-sz}
\vspace{-2em}
\end{center}
\end{figure}

\subsection{Limitation of Previous MI Objective}
\label{sec: Limitation_of_Previous_MI_Objective}

In skill discovery, a skill refers to a kind of behavior that can be represented by a set of semantically meaningful trajectories labeled by a fixed latent variable $z \sim \cZ $, indicating that one skill should map to a set of states. Formally, we have the following assumption.
\begin{assumption}
\label{assum-1}
% In skill discovery, 
The skill space $Z$ is smaller than the state visitation space, i.e., $H(Z) < H(S)$ with $s\sim \rho_\pi(s)$.
\end{assumption}
Empirically, Assumption~\ref{assum-1} holds even with a random policy since RL tasks often have high-dimensional state space, while the skill is a discrete or continuous vector with low dimensions. In contrast, if the skill space is equal or even larger than the state space, we have many-to-one mapping from skill to state, which makes the skills indistinguishable. Meanwhile, since the skill space is often set as a discrete or uniform distribution with fixed entropy, $H(Z)$ is a constant and is irrelevant to the optimization objective of $I(S;Z)=H(Z)-H(Z|S)$.

\begin{figure*}[t]
\begin{center}
\centerline{
\includegraphics[width=2.\columnwidth]{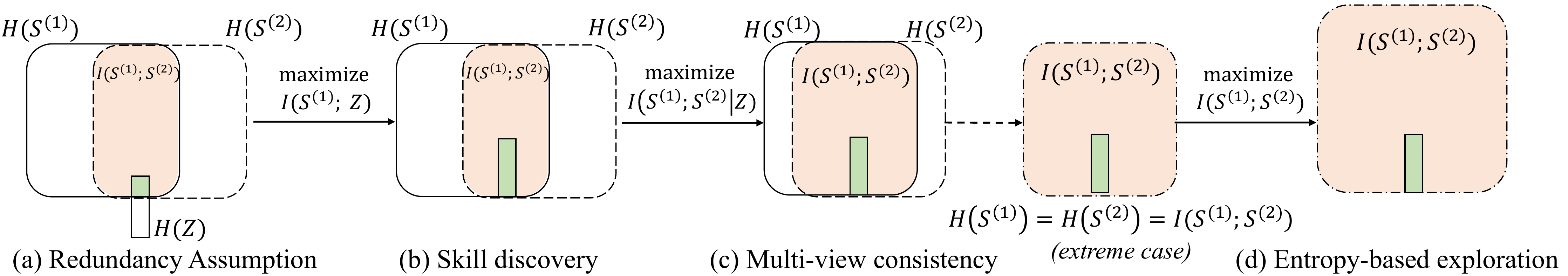}}
\caption{Information diagrams of the learning process in BeCL. (a) We show the redundancy assumption of skill between different views. (b) BeCL performs skill discovery by maximizing $I(S;Z)$ and (c) learns multi-view consistency by maximizing $I(S^{(1)};S^{(2)}|Z)$. (d)~Considering the maximum empowerment and perfect multi-view consistency, our objective directly maximizes the state coverage.}
\label{fig:mi-of-BeCL}
\end{center}
\end{figure*}

We illustrate the information diagrams for maximizing $I(S;Z)$ in Figure~\ref{fig:mi-of-sz}. (\romannumeral1) As shown in Figure~\ref{fig:mi-of-sz}(a), since the states are generated by the skill-conditional policy and the policy is randomly initialized, the states and skills usually share a small amount of information and $I(S;Z) \geq 0$. (\romannumeral2) As we learn a discriminator and train the policy to maximize the discriminability of skills, the MI term $I(S;Z)$ increases and approaches $H(Z)$. As in Figure~\ref{fig:mi-of-sz}(b), when the MI objective is fully optimized, the information of $I(S;Z)$ is contained in $H(Z)$. Under Assumption~\ref{assum-1}, we have 
\begin{equation}\label{eq:limit-sz}
\max I(S;Z) = \max H(Z) - H(Z|S) = H(Z),
\end{equation}
where $\min H(Z|S)=0$ due to the non-negativity of entropy, and $\max H(Z)=H(Z)$ since $H(Z)$ is fixed.

The major limitation of above optimization process is such an MI objective can hinder better state coverage and learn static skills. Specifically, the optimization objective is irrelevant to the state visitation entropy under Assumption~\ref{assum-1} and we have $\max I(S;Z)=H(Z)$ with different $H(S)$. Considering there are two policies $\pi_1$ and $\pi_2$, where $\pi_2$ has better state coverage. Then the relationship between the corresponding state entropy measured by the states visited by the two policies is $H_{\rho_{\pi_1}}(S)< H_{\rho_{\pi_2}}(S)$. Nevertheless, for both policies, the maximum values of the $I(S;Z)$ objective are equal to $H(Z)$, which is irrelevant to their state entropy as shown in Eq.~\eqref{eq:limit-sz}. Since there is no gradient to encourage the agent to update from $\pi_1$ to $\pi_2$, the MI objective can easily obtain its maximum, which discourages the agent from learning far-reaching skills. We remark that such a problem has also been mentioned in previous studies \cite{EDL-2020,LSD-2022}, where they observe the agent visits marginally different states to learn distinguishable skills (e.g., with different static postures). Nevertheless, we highlight that we give an information-theoretical perspective and propose an alternative MI-objective to solve this problem. 

\subsection{The BeCL Objective}

In this section, we introduce the proposed BeCL objective and show how our objective addresses above limitations. We consider a multi-view setting where each skill $z_i$ generates two trajectories $\tau^{(1)}$ and $\tau^{(2)}$ based on the skill-conditional policy. The trajectories $\tau^{(1)}$ and $\tau^{(2)}$ usually have differences since (\romannumeral1) the skill has limited empowerment to behaviors in the early stage of training and (\romannumeral2) the transition function and policy are often stochastic. Based on $\tau^{(1)}$ and $\tau^{(2)}$, we sample two states by following 
\begin{equation}\nonumber
s^{(1)}\sim \tau^{(1)},\quad s^{(2)}\sim \tau^{(2)},
\end{equation}
where $\tau^{(1)},\tau^{(2)}\sim \pi(a|s,z_i), \forall i$. We remark that $s^{(1)}$ and $s^{(2)}$ are generated with the same skill and denote the corresponding random variables by $S^{(1)}$ and $S^{(2)}$, respectively. 

Similar to multi-view representation that assumes each view shares the same task-relevant information \cite{MIB-2020,MIB-2022}, we assume that $S^{(1)}$ and $S^{(2)}$ share the same skill-relevant information since they are generated on the same skill vector $z$. Formally, we give the following mutual redundancy assumption in skill discovery. 
\begin{assumption}[Redundancy]
\label{assump-2}
The states $S^{(1)}$ and $S^{(2)}$ are mutually redundant to the skill-relevant information, i.e., $S^{(1)} \indep Z | S^{(2)}$ or equivalently $I(S^{(1)},Z|S^{(2)})=0$.
\end{assumption}

\begin{figure*}[t]
\begin{center}
\centerline{
\includegraphics[width=2.0\columnwidth]{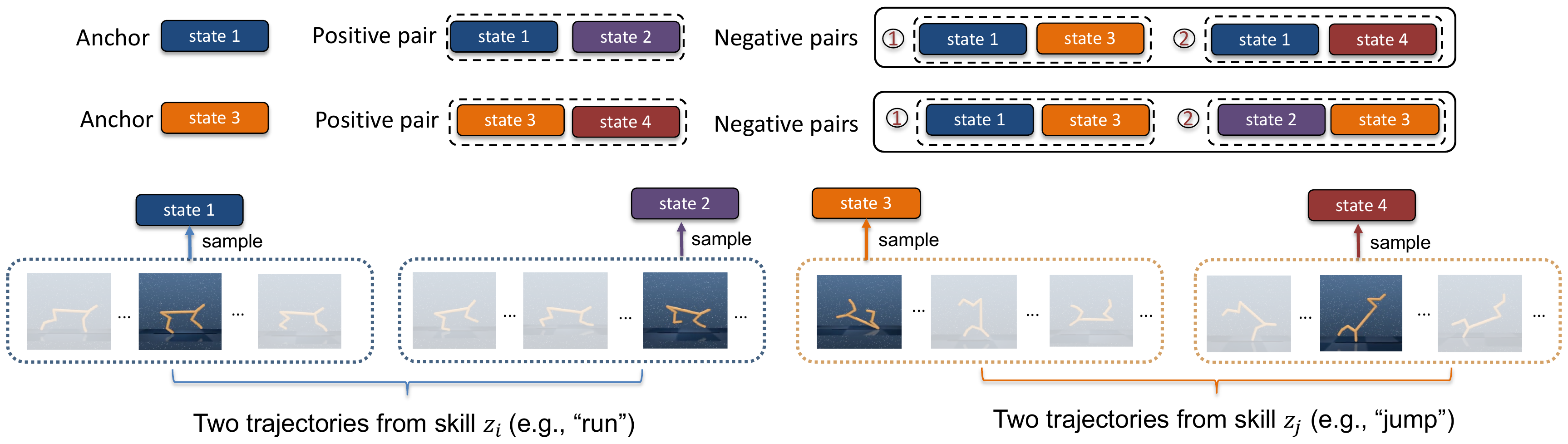}}
\caption{An illustration of the process of contrastive estimation in BeCL. We take two states of different trajectories generated by the same skill as positive samples. In addition, we find that sampling positive pairs from the same trajectory that are far from each other is also well-performed. Meanwhile, we take states of different trajectories generated by different skills as negative samples. Consider \textit{state 1} as example, we use \textit{(state 1, state 2)} as positive pair, and then use \textit{(state 1, state 3)} and \textit{(state 1, state 4)} as negative pairs. The contrastive loss and intrinsic reward are computed in Eq.~\eqref{eq:nce-tau} and Eq.~\eqref{eq:reward}, respectively.}
\label{fig:cl-diagram}
\end{center}
\vspace{-1em}
\end{figure*}

Under Assumption \ref{assump-2}, we have 
\begin{equation}
\label{eq:mi-multivariate}
I(S^{(1)};Z)=I(S^{(2)};Z)=I(S^{(1)};S^{(2)};Z).
\end{equation}
where the last term is multivariate MI. The proof is given in Appendix~\ref{app-theory}. We remark that Assumption \ref{assump-2} does not indicate that the skill has good discriminability (i.e., $I(S,Z)$ is large) initially, but only assuming that the skill information shared between different views (i.e., $S^{(1)}$, $S^{(2)}$) are the same since they are generated by the same skill, as shown in Eq.~\eqref{eq:mi-multivariate}. In most cases, the redundancy assumption holds unless the policy network does not extract any skill information. An illustration of this assumption is given in Figure~\ref{fig:mi-of-BeCL}(a).

Based on the redundancy assumption, we propose a novel MI objective for our method, as 
\begin{equation}
I_{\rm BeCL}=I(S^{(1)};S^{(2)}).
\end{equation}
To analyze the objective, we decompose $I_{\rm BeCL}$ as
\begin{equation}\label{eq:mi-becl}
\begin{aligned}
&I_{\rm BeCL}=I(S^{(1)};S^{(2)})\\
&=I(S^{(1)};S^{(2)};Z)+I(S^{(1)};S^{(2)}|Z)\\
&=\underbrace{\nicefrac{1}{2}\big[I(S^{(1)};Z)+I(S^{(2)};Z)\big]}_{(\romannumeral 1)~\rm skill\:\:discovery}\:+\!\!\!\underbrace{I(S^{(1)};S^{(2)}|Z)}_{(\romannumeral 2)~\rm multi-view\:\: consistency},
\end{aligned}
\end{equation}
where the last equation follows Eq.~\eqref{eq:mi-multivariate}. We analyze the optimization process of $I_{\rm BeCL}$ as follows. 

(1) \textit{Skill discovery}. Maximizing term $(\romannumeral 1)$ in Eq.~\eqref{eq:mi-becl} is equivalent to maximizing $I(S;Z)$ in previous skill discovery methods, where $s\sim S$ can be sampled from $S^{(1)}$ or $S^{(2)}$. As a result, our objective serves as an upper bound of $I(S;Z)$ in previous MI objective \cite{vic}. As shown in Figure~\ref{fig:mi-of-BeCL}(b), this objective expands the information shared between $Z$ and $(S^{(1)},S^{(2)})$. In practice, since $H(Z)$ is much smaller than $H(S)$, this term is relatively easy to optimize to achieve its maximum value (i.e., $H(Z)$).

(2) \textit{Multi-view consistency}. The empowerment of skill to the resulting states is maximized when term $(\romannumeral 1)$ in Eq.~\eqref{eq:mi-becl} achieve $H(Z)$. Then the differences between two views (i.e., $S^{(1)}$ and $S^{(2)}$) come from the randomness of the environment and the policy. As shown in Figure~\ref{fig:mi-of-BeCL}(c), through maximizing the second term in Eq.~\eqref{eq:mi-becl} (i.e., $I(S^{(1)},S^{(2)}|Z)$), the skills will learn to make the explored policy less sensitive to environmental randomness and also drive the policy becomes deterministic, which leads to better multi-view consistency of states based on the same skill. We remark that such an effect is different from previous methods that maximize the policy entropy to keep the policy stochastic \cite{diayn}. Nevertheless, we remark that a deterministic skill-conditional policy can also have a well state coverage by generating far-reaching trajectories. The multi-view consistency makes trajectories sampled with the same skill explore the nearly areas and have better alignment.
 % which encourage the agent to learn meaningful skills
% and explore the nearly areas, which makes the skills less sensitive to the environment noises. 

(3) \textit{Entropy-based Exploration.} In Figure~\ref{fig:mi-of-BeCL}(c), we show an extreme case that $S^{(1)}$ and $S^{(2)}$ are completely consistent by optimizing the multi-view consistency in our objective. In this case, we have the relationship $I(S^{(1)};S^{(2)})=H(S^{(1)})=H(S^{(2)})$ holds, thus maximizing our objective directly maximizes the entropy of explored states. We highlight that such a property addresses the limitation of the previous MI objective. As illustrated in Figure~\ref{fig:mi-of-BeCL}(d), the state coverage increases as we encourage the policy to increase the state visitation entropy. Although such a complete consistency case is unachievable in practice, we will show that BeCL indeed maximizes the state entropy through contrastive estimation of our MI objective in the following.

\subsection{Behavior Contrastive Learning}

To estimate the MI objective in high-dimensional state space, a tractable variational estimator based on neural networks is required \cite{on-MI-bound}. In BeCL, we adopt contrastive learning \cite{cl-2018} to approximate the objective, as 
\begin{equation}\label{eq:nce}
\begin{aligned}
L_{\rm BeCL1}&= \EE_{z_i,\{z_j\}\sim p(z), (s_i^{(1)},s_i^{(2)})\sim p(\cdot|z_i)} \EE_{s_j\sim p(\cdot|z_j),\forall z_j\neq z_i} \\
\qquad &\left[ -\log \:\: \frac{h(s^{(1)}_i,s^{(2)}_i)}{\sum_{s_j \in S^{-}\bigcup s^{(2)}_i}h(s^{(1)}_i,s_j)}\right],
\end{aligned}
\end{equation}
where $z_i$ and $z_j$ are skills sampled from a discrete skill distribution $p(z)$. A positive pair $[s^{(1)}_i$, $s^{(2)}_i]$ contains two states sampled from trajectories based on the same skill-conditional policy $\pi(a|s,z_i)$. In contrast, a negative pair $[s^{(1)}_i, s_j]$ is constructed by sampling $s_i$ and $s_j$ based on different skills $z_i$ and $z_j$. In Eq.~\eqref{eq:nce}, we use a negative sample set $S^{-}$ to represent the states sampled from $\{z_j\}$, where $z_j\neq z_i$. Since $s_i$ and $s_j$ are sampled independently by different skills, we expect their similarity to be much smaller than that of positive pairs. The score function is an exponential similarity measurement between states, as
\begin{equation}
h(s_i,s_j)=\exp\big(f_\phi(s_i)^{\top} f_\phi(s_j)\big),
\end{equation}
where $f_\phi(\cdot)$ is an encoder network with normalization to make $\|f_\phi(\cdot)\|=1$. The function $h(\cdot,\cdot)$ aims to assign high scores for positive pairs and low scores for negative pairs. 

An illustration of the contrastive learning process is given in Figure~\ref{fig:cl-diagram}. We sample $m$ skills $\{z_i\}_{i\in[m]}$ and generate two trajectories for each skill when interacting with the environment. 
Based on the trajectories $\{\tau^{(1)}_i,\tau^{(2)}_i\}_{i\in[m]}$, we sample a state from each trajectory, which gives a positive pair as $\{s^{(1)}_i,s^{(2)}_i\}_{i\in[m]}$. Then, we construct the negative pairs by sampling states come from trajectories based on different skills. For each anchor state, we use the corresponding positive state and $2(m-1)$ negative states to calculate the contrastive loss.

\begin{theorem}
\label{thm:nec-upper}
The relationship between our MI objective in Eq.~\eqref{eq:mi-multivariate} and the contrastive loss defined in Eq.~\eqref{eq:nce} is 
\begin{equation}
I_{\rm BeCL} = I(S^{(1)};S^{(2)}) \geq \log N - L_{\rm BeCL1},
\end{equation}
where $N=2m-1$ and $m$ is the number of sampled skills.
\end{theorem}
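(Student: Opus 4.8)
The plan is to recognize $L_{\rm BeCL1}$ in Eq.~\eqref{eq:nce} as the standard InfoNCE (noise-contrastive) estimator applied to the pair $X=S^{(1)}$, $Y=S^{(2)}$, with one positive sample and $2(m-1)$ negatives, so the denominator ranges over exactly $N=2m-1$ states. The first step is to absorb $\log N$ into a normalized partition function and rewrite the target inequality in the equivalent form
\[
\log N - L_{\rm BeCL1} = \EE\left[\log \frac{h(s_i^{(1)}, s_i^{(2)})}{\frac{1}{N}\sum_{s_j} h(s_i^{(1)}, s_j)}\right] \le I(S^{(1)}; S^{(2)}),
\]
so that it suffices to upper bound the right-hand expectation by the mutual information.

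Second, I would identify the score function that makes the bound tightest. Viewing the softmax in Eq.~\eqref{eq:nce} as a classifier that must pick the positive index out of the $N$ candidates, the Bayes-optimal posterior over which sample is the positive is proportional to the density ratio $p(s_j \mid s_i)/p(s_j)$. Hence the optimal encoder satisfies $h^\star(s_i,s_j) \propto p(s_j \mid s_i)/p(s_j)$ up to a factor depending only on the anchor $s_i$, which cancels in the softmax. I would verify this by writing the generative model of the labelled set (one index drawn jointly with the anchor, the rest drawn from the marginal $p(s)$) and applying Bayes' rule.

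Third, substituting $h^\star$ and using the positive pair $(s^{(1)},s^{(2)})\sim p(s^{(1)},s^{(2)})$, the normalized log-likelihood splits into
\[
\EE\left[\log \frac{p(s^{(2)} \mid s^{(1)})}{p(s^{(2)})}\right] - \EE\left[\log \frac{1}{N}\sum_{s_j} \frac{p(s_j \mid s^{(1)})}{p(s_j)}\right],
\]
whose first term is exactly $I(S^{(1)}; S^{(2)})$. It then remains to control the log-partition term. Here I would use that each negative state is drawn from the marginal, so $\EE_{p(s)}[\,p(s \mid s^{(1)})/p(s)\,] = 1$; writing the optimal loss in the form $\EE[\log(1 + \frac{p(s^{(2)})}{p(s^{(2)}\mid s^{(1)})}\sum_{\text{neg}} p(s_j\mid s^{(1)})/p(s_j))]$, replacing the sum of the $N-1$ negative density ratios by its mean $N-1$, and applying the elementary bound $1 + (N-1)\,a \ge N\,a$ with $a = p(s^{(2)})/p(s^{(2)}\mid s^{(1)})$ collapses the expression to $\log N - I(S^{(1)}; S^{(2)})$, which rearranges to the claim.

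The main obstacle is precisely this last partition-function step. Passing from the random sum $\sum_{j} p(s_j \mid s^{(1)})/p(s_j)$ to its expectation, and arranging the algebra so that the surviving constant is exactly $\log N$ rather than $\log(N-1)$, is where rigor must be handled carefully: it relies on the marginal-expectation identity for the density ratio together with a Jensen-type (or term-dropping) inequality whose pointwise validity is delicate. I expect everything else --- the rewriting of the first step and the optimal-critic identification --- to be routine once the contrastive construction of Eq.~\eqref{eq:nce} is matched to the $(S^{(1)}, S^{(2)})$ pair.
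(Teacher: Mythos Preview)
Your proposal is correct and follows essentially the same route as the paper: both work with the density-ratio critic $h(x,y)\propto p(y\mid x)/p(y)$, invoke the marginal-expectation identity $\EE_{p(s)}[p(s\mid s^{(1)})/p(s)]=1$ for the negatives, and then use the pointwise bound $1+(N-1)a\ge Na$ (equivalently $p(s^{(1)}\mid s^{(2)})\ge p(s^{(1)})$, which the paper justifies via Assumption~\ref{assump-2}) to collapse the partition term to $\log N$. The only presentational difference is that you name the optimal critic up front and (implicitly) use $L(h)\ge L(h^\star)$ to cover the learned encoder, whereas the paper runs the algebra forward from $I-\log N$ and identifies $h$ at the end; the ``delicate'' sum-to-mean replacement you flag is exactly the step the paper writes as an equality.
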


The proof is given in Appendix~\ref{app-theory}. Since $N$ is a constant, minimizing the contrastive loss will maximize our MI objective $I(S^{(1)};S^{(2)})$. As discussed in Figure~\ref{fig:mi-of-BeCL}, maximizing $I(S^{(1)};S^{(2)})$ can perform skill discovery, improve the multi-view consistency, and improve the state coverage.

Comparing to other RL algorithms that use data augmentation \cite{srinivas2020curl}, dynamics consistency \cite{infomax-2020}, or temporal information \cite{eysenbach2020goal-conditioned} for contrastive learning, we provide a novel way by using skills to construct positive and negative pairs, which extracts skill-relevant information for unsupervised RL.

\subsection{Qualitative Analysis}

In practice, we follow SimCLR \cite{simclr} by using a small temperature $\kappa<1$ in our contrastive objective to control the strength of penalties. Specifically, we define
\begin{equation}\label{eq:nce-tau}
\begin{aligned}
&L_{\rm BeCL2}=\EE_{\: i,j\in[m],s^{(1)}_i,s^{(2)}_i,s_j}\\
&=\left[-\log\frac{\exp\big(f(s^{(1)}_i)^{\top}f(s^{(2)}_i)/\kappa \big)}{\sum_{s_j \in S^{-}\bigcup s_i^{(2)}}\exp \big(f(s_j)^{\top}f(s^{(1)}_i)/\kappa \big)}\right].
\end{aligned}
\end{equation}
There are two effects in optimizing Eq.~\eqref{eq:nce-tau} in skill discovery. (\romannumeral1) The positive pairs in the numerator become similar in the feature space, achieving better multi-view alignment for trajectories based on the same skill. For implementation, $s_i$ or $s_j$ in Eq.~\eqref{eq:nce-tau} can also be replaced by other information (e.g., sub-trajectory) to capture the long-term consistency. (\romannumeral2) The negative samples in the denominator have repulsive force to each other, which will push the states in one skill away from states in other skills. Finally, the states of different skills will roughly be uniformly distributed in the state space, which helps the agent improve the state coverage. 

Formally, since the state features lie on a hypersphere, i.e., $\{f(s)\in\mathbb{R}^d:\|f(s)\|=1\}$, we adopt the von Mises-Fisher (vMF) distribution as a spherical density function to perform kernel density estimation (KDE) \cite{kde-2019,wang2020alignment_and_uniformity}. The following theorem establishes the relationship between the contrastive objective $L_{\rm BeCL2}$ and the state entropy estimation \cite{entropy-1997}. 

\begin{theorem}
\label{thm:entropy}
With sufficient negative samples, minimizing $L_{\rm BeCL2}$ can maximize the state entropy, as
\begin{equation}\label{eq:thm2}
\!\!\lim_{N\rightarrow \infty}\!L_{\rm BeCL2} \!=\! 
-\frac{1}{\kappa} \EE_{s_i}[f(s^{(1)}_i)^\top f(s^{(2)}_i)] - \hat{H}\big(f(s)\big) + \log C,
\end{equation}
where $\hat{H}(\cdot)$ is a resubstitution entropy estimator through the von Mises-Fisher (vMF) kernel density estimation, and $\log C$ is a normalization constant. 
\end{theorem}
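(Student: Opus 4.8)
The plan is to decompose $L_{\rm BeCL2}$ into an \emph{alignment} contribution coming from the numerator and a \emph{uniformity} (log-partition) contribution coming from the denominator, and then to recognize the latter as an (unnormalized) von Mises-Fisher kernel density estimate so that the resubstitution entropy estimator emerges. First I would split the logarithm of the ratio in Eq.~\eqref{eq:nce-tau}, writing
\begin{equation}\nonumber
L_{\rm BeCL2} = -\frac{1}{\kappa}\EE_{s_i}\big[f(s^{(1)}_i)^\top f(s^{(2)}_i)\big] + \EE_{s_i}\Big[\log \!\!\sum_{s_j\in S^{-}\cup s^{(2)}_i}\!\! \exp\big(f(s_j)^\top f(s^{(1)}_i)/\kappa\big)\Big].
\end{equation}
The first term is already exactly the alignment term appearing in Eq.~\eqref{eq:thm2}, so the entire task reduces to controlling the log-sum-exp term as $N=2m-1\rightarrow\infty$.

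Next I would treat the inner sum as a kernel estimate. Because the features lie on the unit hypersphere, the map $(u,v)\mapsto C_d(1/\kappa)\exp(u^\top v/\kappa)$ is precisely the vMF kernel with concentration $1/\kappa$, where $C_d(1/\kappa)$ is its dimension-dependent normalizing constant. Hence the vMF-KDE of the feature density evaluated at the anchor $f(s^{(1)}_i)$ is $\hat p\big(f(s^{(1)}_i)\big)=\frac{C_d(1/\kappa)}{N}\sum_{s_j}\exp\big(f(s_j)^\top f(s^{(1)}_i)/\kappa\big)$. Rearranging gives $\log\sum_{s_j}\exp(\cdot)=\log\hat p\big(f(s^{(1)}_i)\big)+\log N-\log C_d(1/\kappa)$, and taking the expectation over the anchor turns $-\EE_{s_i}\big[\log\hat p(f(s^{(1)}_i))\big]$ into exactly the resubstitution entropy estimator $\hat H(f(s))$ of \cite{entropy-1997}, since that estimator is the empirical average of $-\log\hat p$ over the samples. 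Collecting the anchor-independent quantities $\log N-\log C_d(1/\kappa)$ into the single constant $\log C$ then reproduces Eq.~\eqref{eq:thm2}.

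The role of the limit $N\rightarrow\infty$, which I would address last, is to justify that the empirical average $\frac{1}{N}\sum_{s_j}\exp(\cdot)$ concentrates on its population counterpart $\EE_{s}\big[\exp(f(s)^\top f(s^{(1)}_i)/\kappa)\big]$, so that the KDE is asymptotically consistent (the single correlated positive term $s^{(2)}_i$ contributes negligibly as one summand out of $N$). Since $f(s_j)^\top f(s^{(1)}_i)\in[-1,1]$, the summands are uniformly bounded, so the law of large numbers applies and dominated convergence lets me exchange the limit with $\EE_{s_i}$. The main obstacle I anticipate is interpretive rather than computational: the bare $\log N$ term diverges, so the equality in Eq.~\eqref{eq:thm2} must be read as holding \emph{up to an $f$-independent additive constant} that is absorbed into $\log C$ (it depends on $N$, $d$, and $\kappa$, but not on the encoder $\phi$ or the states). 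Pinning this down precisely, and confirming that the vMF concentration is correctly matched to $1/\kappa$ so that $C_d(1/\kappa)$ is indeed the right normalizer, is the delicate step; once it is settled, the stated conclusion follows, because $\log C$ carries no gradient with respect to $\phi$ and hence minimizing $L_{\rm BeCL2}$ jointly maximizes the alignment and the state entropy $\hat H(f(s))$.
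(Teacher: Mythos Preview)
Your proposal is correct and takes essentially the same approach as the paper: split $L_{\rm BeCL2}$ into the alignment term and the log-sum-exp, identify the latter (after dividing by $N$) as an unnormalized vMF-KDE with concentration $1/\kappa$ so that $-\EE[\log\hat p]$ becomes the resubstitution entropy estimator, and absorb $\log N$ together with the vMF normalizer into $\log C$, invoking the law of large numbers (with bounded summands) to justify the limit. The paper's version differs only cosmetically---it subtracts $\log M$ at the outset and passes through the population expectation before reintroducing the finite-sample KDE---and your remarks on the negligibility of the single correlated positive summand and on the diverging $\log N$ being absorbed into an $f$-independent constant are, if anything, stated more carefully than in the paper.
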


A detailed proof is given in Appendix~\ref{app-theory}. In Theorem~\ref{thm:entropy}, the vMF kernel has a concentration parameter $\kappa^{-1}$, which controls how peaky of the distribution is around its referenced feature in entropy estimation. In practice, we set $\kappa=0.5$ to obtain a reasonable result. According to Theorem~\ref{thm:entropy}, the first term of Eq.~\eqref{eq:thm2} is related to skill discovery methods \cite{diayn,liu2021aps} by improving the multi-view alignment to increase the empowerment of skills. The second term is similar to data-based unsupervised RL methods \cite{apt,cic} as they explicitly measure the state entropy through a particle entropy estimator, while we maximize the state entropy implicitly via contrastive learning. 

For unsupervised RL training, we set the intrinsic reward to 
\begin{equation}\label{eq:reward}
r(s_i^{(1)}) \!:=\! \EE_{s_i^{(2)},s_j}
\!\!\left[\frac{\exp\big(f(s^{(1)}_i)^{\top}f(s^{(2)}_i)/\kappa \big)}{\sum_{s_j \sim S^{-}\bigcup s_i^{(2)}}\exp \big(f(s_j)^{\top}f(s^{(1)}_i)/\kappa \big)}\right],
\end{equation}
where we estimate the reward by sampling $s_i^{(2)}$ and $S^{-}$ from a training batch that contains $2m$ trajectories. Then we use the intrinsic reward for policy training. The algorithmic description of our method is given in Appendix~\ref{app-alg}.

\section{Related Works}

\paragraph{Unsupervised Skill Discovery}
Unsupervised skill discovery allows agents to learn discriminable behavior by maximizing the MI between states and skills \cite{vic, florensa2017stochastic, diayn, dads, Baumli_Warde-Farley_Hansen_Mnih_2021}. However, many works \cite{URLB, cic, disdain} have shown that skill learning through variational MI maximization provides a poor coverage of state space, which may affect its applicability to downstream tasks with complex environments. Some methods consider restricting the observation space of skill learning to x-y Cartesian coordinates to increase in traveled distances (or variations) in the coordinate space (called the x-y prior) \cite{LSD-2022, MUSIC}. However, these methods introduce strong assumptions in learning skills and are mainly narrow in navigation tasks and a few other tasks with coordinate information. In addition, other methods also propose auxiliary exploration mechanisms or training techniques \cite{disdain, pmlr-v139-bagaria21a, NEURIPS2019_251c5ffd, EDL-2020, recurrent-2022} to tackle the problem of state coverage.
% while they often constitute several modules and increase the computation cost.
%For example, EDL \cite{EDL-2020} separate the learning process by first optimize $H(S)$ via an exceptional entropy-based exploration policy, and followed by skill learning and discovery on the basis of collected samples. DISDIAYN \cite{disdain} introduces N discriminators with different parameters to alleviate the Epistemic uncertainty of the sample from the discriminator and ReST \cite{recurrent-2022} trains different skills one after another recurrently to avoid the interference of different skills when visiting same states. % 
In contrast, our approach learns diverse skills without limiting the observation space, and also implicitly increases the state entropy to encourages exploration without auxiliary losses. 

% We empirically show that BeCL learns useful skills to improve the adaptation ability of downstream tasks without coordinate information. 

% not sure whether write or not, it should be put into other section ?
% In addition, to apply our skill discovery method to high dimensional environments. the first problem that we need to tackle is the means of evaluations. In fact, RL community does not have general and objective metrics evaluating the quality of behavior derived from different skills, especially in some high-dimensional and less-semantic environments, even though some papers argue that (QD evaluate the frequency of gaits), and (DADS, DIAYN evaluate with 2\^|z| ??, and aforementioned diversity of position ). Because our method mainly focuses on unsupervised learning methods, We follow the means of evaluation about competence-based methods from URLB and assert that standard that learns semantic skills if the agent can learn such behavior, so we just concentrate on URLB, which provides pretrain stage only depended on intrinsic reward without expert demonstrations using supervised learning to predict the primitives taken by experts.% 

\paragraph{Unsupervised RL} Unsupervised RL algorithms focus on training a general policy for fast adaptation to various downstream tasks. Unsupervised RL mainly contains two stages: pretrain and finetune. 
% In the pretrain stage, the agent interacts with the environment to learn a policy only with intrinsic rewards. In the finetune stage, the agent finetunes the pretrained policy with extrinsic rewards given the specific downstream tasks. 
The core of unsupervised RL is to design an intrinsic reward in the pretrain stage.
% There are many methods learns self-supervised representations of environment models and specific behavior advanced in the pretrain stage enable sample-efficient few-shot adaptation when facing specific tasks. 
% 
URLB \cite{URLB} broadly divides existing algorithms into three categories. The data-based approaches encourage the agent to collect novel states in pretraining through maximization of state entropy \cite{apt, proto, cic}; the knowledge-based approaches enable agents to learn behavior based on the output of some model prediction (i.e. curiosity, surprise, uncertainty, etc.) \cite{pathak2017curiosity, pathak2019disagreement, burda2018rnd,bai-1,bai-2,bai-3}; and the competence-based approaches maximize the agent empowerment over environment from the perspective of information theory, which means the agent is trained to discover what can be done in an environment while learning how to achieve it \cite{lee2019smm,diayn, liu2021aps}. Our method can be regarded as a novel competence-based approach as we can make each skill learn potential behavior that is useful for downstream tasks and also improve the state coverage.

\paragraph{Contrastive Learning}
Contrastive learning is a representation learning framework in deep learning \cite{he2020moco,SwAV,grill2020bootstrap, CLIP}. The main idea is to define positive and negative pairs to learn useful representations. Contrastive learning has also been used in RL as auxiliary tasks to improve the sample efficiency. Positive and negative pairs in RL can be constructed following state enhancement \cite{srinivas2020curl}, temporal consistency \cite{sermanet2018time-contrastive}, dynamic-relevant transition \cite{infomax-2020,bai2021dynamic,qiu2022Contrastive_ucb}, return estimation \cite{liu2021Return-basedCL}, or goal information \cite{eysenbach2020goal-conditioned}. Unlike these methods, we use skills to divide the positive and negative pairs. Recently proposed CIC \cite{cic} performs contrastive learning to approximate $I(S;Z)$ and uses the learned representation for entropy estimation. In contrast, we propose a different contrastive objective through a multi-view perspective and use the objective as an intrinsic reward. Our method is related to the alignment and uniformity properties of contrastive learning mentioned in \citet{wang2020alignment_and_uniformity}. Our contrastive objective also learns uniformly distributed state representations to improve the state coverage.

\section{Experiments}
\vspace{-0.5em}

In this section, we first provide qualitative analysis for the behaviors of different skills learned in BeCL and other competence-based methods in a 2D continuous maze and challenging continuous control tasks from the DeepMind Control Suite (DMC) \cite{dmc}. We then compare the adaptation efficiency of the learned skills in downstream tasks of URLB \cite{URLB}, where previous competence-based methods shown to produce relatively weak performance. We finally conduct several ablation studies on skill finetuning, skill dimensions, and temperature.

\subsection{Continuous 2D Maze}

We start from a continuous 2D maze to illustrate the skills learned in BeCL. We adopt the environment of \citet{EDL-2020}, where the agent observes its current position ($\cS \in \RR^2$) and takes an action ($\cA \in \RR^2$) to control its velocity and direction. The agent will be blocked if it collides with the wall. We make comparisons with three typical skill optimization objectives from DIAYN \cite{diayn}, DADS \cite{dads}, and CIC \cite{cic}. Specifically, DIAYN maximizes the reverse form of MI as $I(S;Z) = H(Z) - H(Z | S)$, DADS maximizes the forward form of MI as $I(S;Z) = H(S) - H(S | Z)$, and CIC is a data-based method that maximizes the state entropy $H(S)$ with a particle estimator. 
% We implement the environments based on the codebase in \citet{EDL-2020}. 
For a fair comparison, all methods sample skills from a 10-dimensional discrete distribution and follow the same training procedure. The differences between methods are the formulation of intrinsic rewards and representations.

\begin{question}
Can BeCL balance skill empowerment and state coverage?
\end{question}

\begin{figure}[t]
    \centering
    \resizebox{\linewidth}{!}{\includegraphics{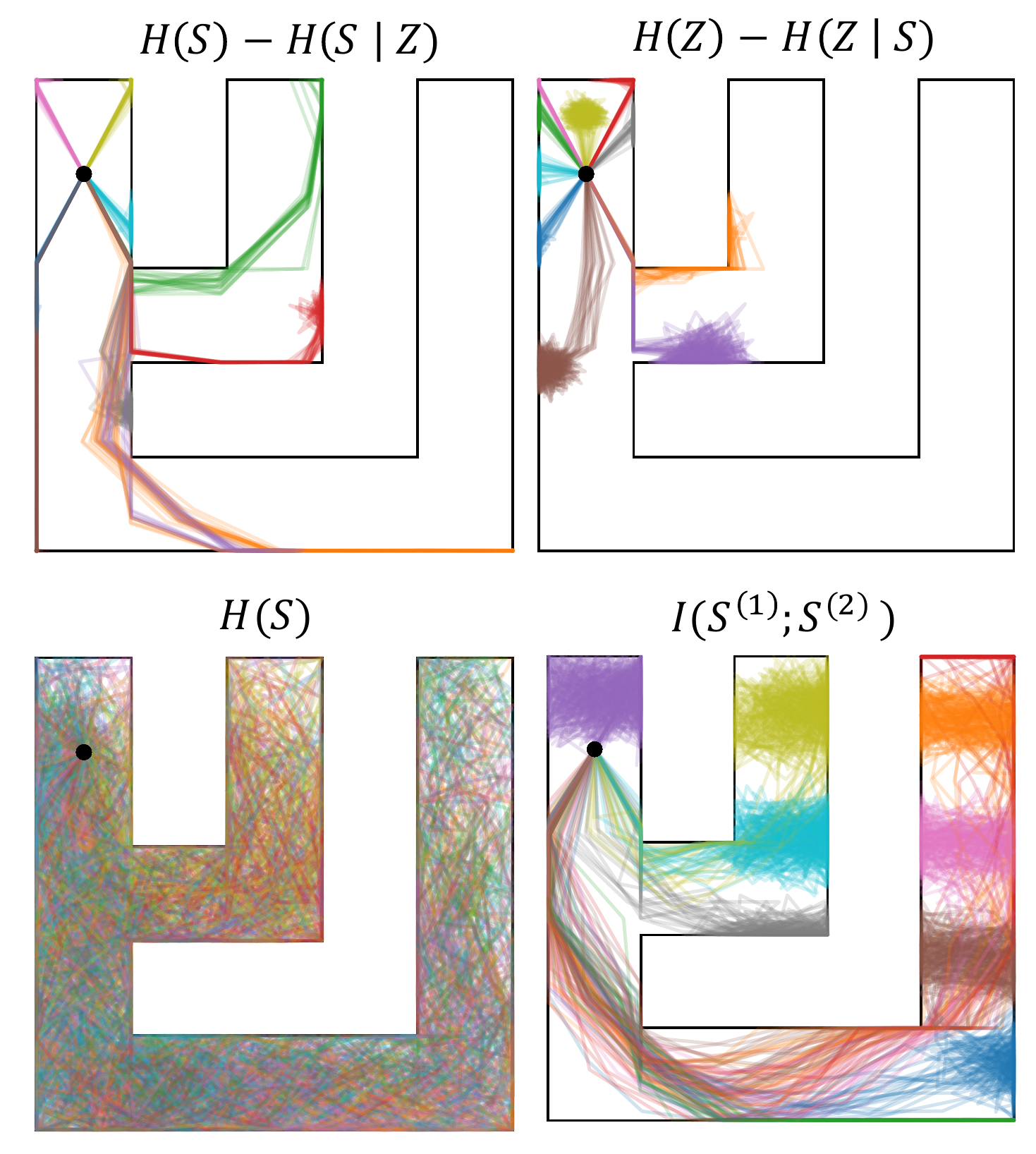}}
    \vspace{-1em}
    \caption{A illustration of different skill discovery objectives. The initial state is denoted by a black dot and the color of the trajectories denote different skill upon which it was conditioned. We generate 20 trajectories for each skill. \textit{Top:} the reverse \cite{diayn} and forward \cite{dads} forms for optimizing $I(S; Z)$ can discover discriminable skills but fail to reach the right side of the maze. \textit{Bottom left:} Maximizing the state entropy allows skills to cover the entire space of maze, while different skills cannot be distinguished from each other \cite{cic}. \textit{Bottom right:} BeCL learns skills that are distributed in different areas with well alignment in the same skill, indicating that BeCL can balance state coverage and empowerment in skill discovery. }\label{fig:comparison_all_maze}
\end{figure} 

\begin{figure*}[t]
\begin{center}
\centerline{
\includegraphics[width=2.\columnwidth]{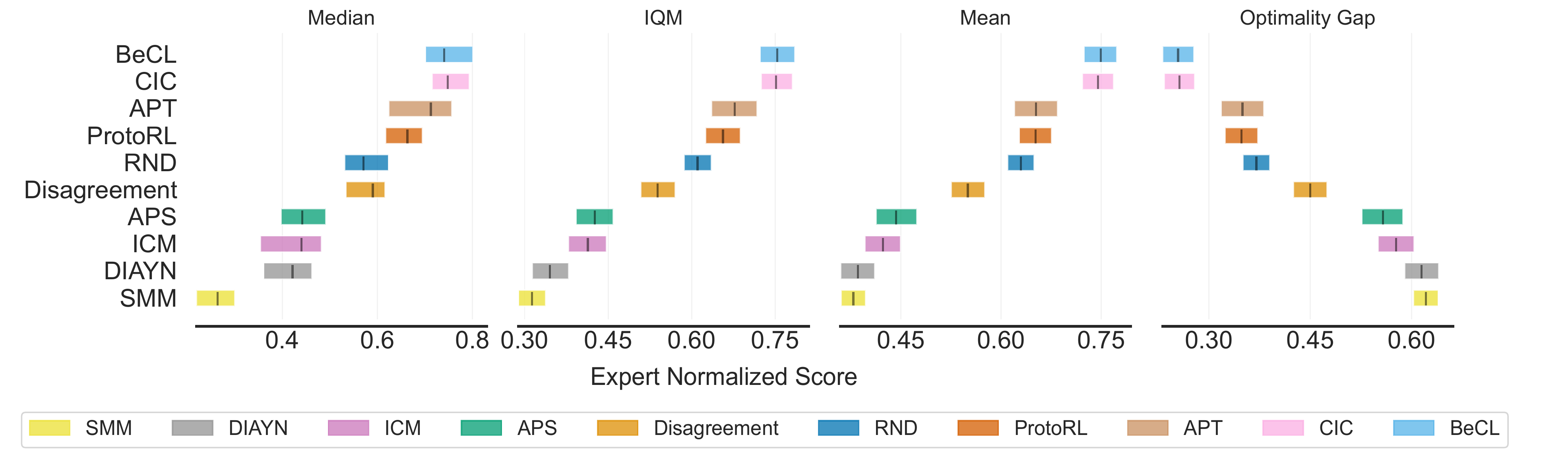}}
\vspace{-0.5em}
\caption{The aggregate statistics indicate the adaptation performance of different unsupervised RL methods in 12 downstream tasks. We run all baselines for 10 seeds and report the aggregated normalized score \citep{agarwal2021IQM} after 100K steps of finetune. BeCL obtains the highest Interquratile Mean (IQM) score of 75.38\% and the lowest Optimality Gap (OG) score of 25.44\%.}
\label{fig:iqm_metrics}
\end{center}
\vspace{-1em}
\end{figure*}

We visualize the trajectories generated by different skills in maze and compare BeCL to other skill discovery methods in Figure~\ref{fig:comparison_all_maze}. We find DIAYN and DADS produce discriminable skills while the trajectories are not far-reaching. In contrast, CIC can produce skills with the best state coverage, while the trajectories of different skills are mixed and lack of diversity. Since CIC maximizes the state entropy via a \textit{k-nearest-neighbor} estimator and maps states from the same skill to similar representations, the closest $k$-th neighbors can also include states conditioned on the same skill. Maximizing the state entropy will push these states away from each other and leads to an increase in $H(S | Z)$. In contrast, BeCL maps the states from the same skill into similar features to encourage better empowerment; meanwhile, it also pushes states in one skill away from states in other skills to obtain diverse skills and better state coverage. We provide further numerical analysis of mutual information and entropy estimation between the methods in the maze, as shown in Figure~\ref{fig:MI and Entropy Estimation} in Appendix \ref{app:Additional Implementation in Maze}.

\subsection{URLB Environments}

We evaluate BeCL in DMC tasks from URLB benchmark \cite{URLB}, which needs to discover more complicated skills to achieve the desired behavior. URLB consists of three different domains, including Walker, Quadruped, and Jaco Arm. Each domain contains four downstream tasks. Specifically, Walker is a biped constrained to a 2D vertical plane (i.e. $\cS \in \RR^{24}, \cA \in \RR^{6}$), which has four different locomotion tasks including \textit{(Walker, Stand)}, \textit{(Walker, Run)}, \textit{(Walker, Flip)} and \textit{(Walker, Walk)}; Quadruped is a quadruped with four downstream tasks including \textit{(Quadruped, Stand)}, \textit{(Quadruped, Run)}, \textit{(Quadruped, Jump)} and \textit{(Quadruped, Walk)}, while it is more challenging due to the higher-dimensional states and actions spaces (i.e. $\cS \in \RR^{78}, \cA \in \RR^{16}$) and complex dynamics; Jaco Arm is a 6-DoF robotic arm with a three-finger gripper (i.e. $\cS \in \RR^{55}, \cA \in \RR^{9}$) and its downstream tasks aim to reach and manipulate a movable diamond with different positions. We illustrate more details about the tasks in Figure~\ref{fig:intro_env_urlb} of Appendix~\ref{app:Additional Experiments in URLB}

\begin{question}
What skills do BeCL learn in DMC?
\end{question}

% Previous competence-based methods cannot learn diverse and dynamic skills in the DMC \cite{dmc} as they do in the OpenAI gym \cite{openAIgym}. This is probably due to the difference in episodic terminal settings (e.g., whether it resets when the agent loses balance), which is discussed in \citet{URLB,cic}. 

We provide a qualitative analysis of the behavior of skills in more complex control tasks. We compare BeCL to other competence-based algorithms including CIC and DIAYN in Walker domain. As shown in Figure~\ref{fig:comparison_dmc_behavior} in Appendix \ref{app-vis-skill-dmc}, with the same pretraining steps, BeCL produces dynamic and non-trivial behavior during a finite episode. In contrast, solely maximizing state entropy like CIC leads to trivial and dynamic behavior since it encourages the agent to collect unusual states, such as visiting `handstands' state by constantly trying to wiggle the agent's body for larger reward. In addition, DIAYN learns static `posing' skills since the static skills can also optimize the MI objective $I(S;Z)$, as we discuss in Section~\ref{sec: Limitation_of_Previous_MI_Objective}. 

\begin{question}
How does the adaptation efficiency of BeCL compared to other unsupervised RL algorithms?
\end{question}

\paragraph{Baselines.} We compare BeCL with other baselines in the URLB benchmark, including knowledge-based, data-based, and competence-based algorithms. Knowledge-based methods include ICM \cite{pathak2017curiosity}, Disagreement \cite{pathak2019disagreement}, and RND \cite{burda2018rnd}; data-based methods include APT \cite{apt}, ProtoRL \cite{proto}, and CIC \cite{cic}; and competence-based methods include SMM \cite{lee2019smm}, DIAYN \cite{diayn}, and APS \cite{liu2021aps}. The main difference between baselines are the design of intrinsic reward and representation. We summarize the implementation details of the baselines in Appendix~\ref{app:description_of_baselines_in_URLB}. We follow the hyper-parameters and the implementation recommended by URLB and CIC.

\paragraph{Evaluation.} To perform a fair comparison, we follow standard pretraining and finetuning procedures as suggested in URLB \cite{URLB}. We pretrain each algorithm for 2M steps with only intrinsic rewards for each domain, and then finetune the policy for 100K steps in different downstream tasks with extrinsic reward. We use DDPG \cite{ddpg} as the basic RL algorithm and train each method for 10 seeds, resulting in 1200 runs in total (i.e., $10$ algorithms $\times$ $10$ seeds $\times$ $3$ domains $\times$ $4$ tasks). 

Following \textit{reliable} \cite{agarwal2021IQM}, we adopt interquartile mean (IQM) and optimality gap (OG) metrics aggregated with stratified bootstrap sampling as our main evaluation metrics across all runs. IQM discards the bottom and top 25\% of the runs and calculates the mean score of the remaining 50\% runs. OG evaluates the amount by which the algorithm fails to meet a minimum score of desired target. The expert score is obtained by running DDPG with 2M steps in the corresponding tasks and we adopt the expert scores from \citet{cic}. We normalize each score with the expert score and the statistical results are shown in Figure~\ref{fig:iqm_metrics}. In the IQM metric, BeCL achieves competitive performance with CIC (75.38\% and 75.18\%, respectively) and outperforms the next best skill discovery algorithm (i.e., APS) by 38.2\%. In OG metric, BeCL achieves the closest performance to expert performance (around 25.44\%) and the CIC score achieves approximately 25.75\%.

\subsection{Ablation Study}

% We conduct several ablation studies of the BeCL as follows.

\begin{question}
Whether different skills have different adaptation efficiency on downstream tasks?
\end{question}

We evaluate the adaptation efficiency of the learned skills in pretraining the \textit{Quadruped} agent. We compare the normalized reward after finetuning by initializing the policy with different skills vector. We show the downstream performance in \textit{(Quadruped, stand)} and \textit{(Quadruped, run)} tasks in Figure~\ref{fig:ablation_finetune_skills} of Appendix~\ref{app:Additional Experiments in URLB}. We find that the performance of different skills does not always revolve around statistical averages and some skills have relatively weak adaptation ability, which indicates that a skill-chosen process would be desired before finetuning. For example, CIC \cite{cic} chooses skills with grid sweep in the first 4K finetuning steps. In contrast, BeCL randomly samples skills in the finetuning stage to report the average performance of skills, which provides a more comprehensive evaluation of the learned skills.

\begin{question}
How does the skill dimensions affect unsupervised skill discovery?
\end{question}

We explore the impact of the skill dimensions with a discrete skill space. We train DIAYN \cite{diayn}, DADS \cite{dads} and BeCL with different numbers of skill in a tree-like maze. As shown in Figure~\ref{fig:comparison_skill_dim_maze} of Appendix~\ref{app:Additional Implementation in Maze}, with the skill dimension increases, DIAYN and DADS still optimize MI in a narrow area of the maze and cannot go far away. In contrast, BeCL skills gradually cover the entire maze when the skill dimension increases, which coincides to Theorem~\ref{thm:entropy} that using more skills increases the number of negative samples and provides a better entropy estimator. We further study the impact of the skill dimension on adaptation efficiency in DMC, as shown in Figure~\ref{fig:ablation_study_urlb} of Appendix~\ref{app:Additional Experiments in URLB}. The results show that increasing the skill dimension can benefit the adaptation performance in hard downstream tasks (e.g. \textit{(Walker, Run)} and \textit{(Walker, Flip)}), while it cannot improve the performance in relatively easy tasks approaching expert scores (e.g. \textit{(Walker, Stand)}).

\begin{figure*}[h!]
\begin{center}
\centerline{
\includegraphics[width=2\columnwidth]{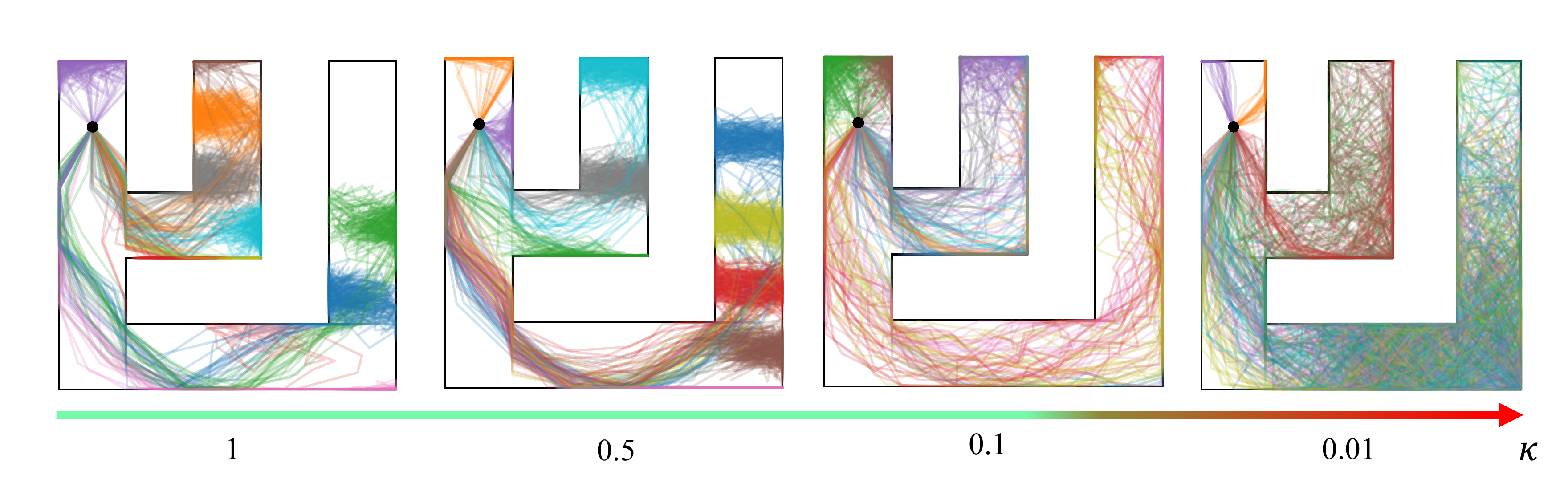}}
\vspace{-1em}
\caption{The impact of temperature $\kappa$ in the behavior of BeCL. Lower temperatures encourage more skills to cover the right side of the maze (e.g. temperature from 1 to 0.5). But as the temperature dropped further (e.g. temperature from 0.1 to 0.01), some skills tend to produce more dispersed trajectories. This makes some skills indistinguishable from each other in a confined maze.}
\label{fig:ablation_temp_maze}
\end{center}
\vspace{-1em}
\end{figure*}

\begin{question}
How does the temperature $\kappa$ in the contrastive objective affect empowerment and state coverage?
\end{question}

To analyze the effect of temperature $\kappa$ on the behavior of BeCL skills, we evaluate several values of $\kappa$ in the maze task. The result is shown in Figure~\ref{fig:ablation_temp_maze}. We find that using a smaller temperature value encourages the skills to explore the entire maze space (i.e., decreases from 1 to 0.5). However, as the temperature decreases further (i.e., decreases from 0.1 to 0.01), trajectories from different skills tend to be more uniformly distributed, which somewhat weakens the discriminability of skills. We remark that such a scenario resembles previous analysis of temperature \cite{wang2021understanding,wang2020alignment_and_uniformity}, where $\kappa$ is considered to balance the uniformity-tolerance dilemma in unsupervised contrastive learning. Specifically, we hope that the skills can be distributed uniformly to be globally separated; meanwhile, we hope that the trajectories are locally clustered and are more tolerant to the states generated with the same skill. A suitable $\kappa$ may avoid the gathering of different skills in local areas, which helps to explore different regions and improves the state coverage. Meanwhile, too small $\kappa$ can be harmful to the alignment of positive samples, which weakens the empowerment of skills.

\section{Conclusion}

We propose BeCL with a novel MI objective for unsupervised skill discovery from a multi-view perspective. Theoretically, BeCL discovers skills and maximizes the state coverage simultaneously, which makes BeCL produce diverse behaviors in various skills. Empirical results show that BeCL learns diverse and far-reaching skills in mazes and performs well in downstream tasks of URLB. In the future, we will improve BeCL by selecting hard negative samples to obtain a tighter MI bound, and developing a meta-controller for better skill selection in the finetuning stage.

\section*{Acknowledgements}

This work was done during Rushuai Yang's internship at Shanghai Artificial Intelligence Laboratory. This work was supported by Shanghai Artificial Intelligence Laboratory and the National Science Fund for Distinguished Young Scholars under Grant 62025602. The authors thank Kang Xu, Haoran He, Qichen Zhao, Dong Wang and Zhigang Wang for the helpful discussions. We also thank anonymous reviewers, whose invaluable comments and suggestions have helped us to improve the paper.

\nocite{langley00}

\bibliography{main}

\begin{thebibliography}{57}
\providecommand{\natexlab}[1]{#1}
\providecommand{\url}[1]{\texttt{#1}}
\expandafter\ifx\csname urlstyle\endcsname\relax
  \providecommand{\doi}[1]{doi: #1}\else
  \providecommand{\doi}{doi: \begingroup \urlstyle{rm}\Url}\fi

\bibitem[Agarwal et~al.(2021)Agarwal, Schwarzer, Castro, Courville, and
  Bellemare]{agarwal2021IQM}
Agarwal, R., Schwarzer, M., Castro, P.~S., Courville, A.~C., and Bellemare, M.
\newblock Deep reinforcement learning at the edge of the statistical precipice.
\newblock \emph{Advances in neural information processing systems},
  34:\penalty0 29304--29320, 2021.

\bibitem[Bagaria et~al.(2021)Bagaria, Senthil, and
  Konidaris]{pmlr-v139-bagaria21a}
Bagaria, A., Senthil, J.~K., and Konidaris, G.
\newblock Skill discovery for exploration and planning using deep skill graphs.
\newblock In Meila, M. and Zhang, T. (eds.), \emph{Proceedings of the 38th
  International Conference on Machine Learning}, volume 139 of
  \emph{Proceedings of Machine Learning Research}, pp.\  521--531. PMLR, 18--24
  Jul 2021.

\bibitem[Bai et~al.(2021{\natexlab{a}})Bai, Liu, Liu, Wang, Zhao, Han, and
  Wang]{bai-3}
Bai, C., Liu, P., Liu, K., Wang, L., Zhao, Y., Han, L., and Wang, Z.
\newblock Variational dynamic for self-supervised exploration in deep
  reinforcement learning.
\newblock \emph{IEEE Transactions on Neural Networks and Learning Systems},
  2021{\natexlab{a}}.

\bibitem[Bai et~al.(2021{\natexlab{b}})Bai, Wang, Han, Garg, Hao, Liu, and
  Wang]{bai2021dynamic}
Bai, C., Wang, L., Han, L., Garg, A., Hao, J., Liu, P., and Wang, Z.
\newblock Dynamic bottleneck for robust self-supervised exploration.
\newblock \emph{Advances in Neural Information Processing Systems},
  34:\penalty0 17007--17020, 2021{\natexlab{b}}.

\bibitem[Bai et~al.(2021{\natexlab{c}})Bai, Wang, Han, Hao, Garg, Liu, and
  Wang]{bai-2}
Bai, C., Wang, L., Han, L., Hao, J., Garg, A., Liu, P., and Wang, Z.
\newblock Principled exploration via optimistic bootstrapping and backward
  induction.
\newblock In \emph{International Conference on Machine Learning}, pp.\
  577--587. PMLR, 2021{\natexlab{c}}.

\bibitem[Barreto et~al.(2019)Barreto, Borsa, Hou, Comanici, Ayg{\"u}n, Hamel,
  Toyama, Mourad, Silver, Precup, et~al.]{NEURIPS2019_251c5ffd}
Barreto, A., Borsa, D., Hou, S., Comanici, G., Ayg{\"u}n, E., Hamel, P.,
  Toyama, D., Mourad, S., Silver, D., Precup, D., et~al.
\newblock The option keyboard: Combining skills in reinforcement learning.
\newblock \emph{Advances in Neural Information Processing Systems}, 32, 2019.

\bibitem[Baumli et~al.(2021)Baumli, Warde-Farley, Hansen, and
  Mnih]{Baumli_Warde-Farley_Hansen_Mnih_2021}
Baumli, K., Warde-Farley, D., Hansen, S., and Mnih, V.
\newblock Relative variational intrinsic control.
\newblock \emph{Proceedings of the AAAI Conference on Artificial Intelligence},
  35\penalty0 (8):\penalty0 6732--6740, May 2021.
\newblock \doi{10.1609/aaai.v35i8.16832}.

\bibitem[Beirlant et~al.(1997)Beirlant, Dudewicz, Gy{\"o}rfi, Van~der Meulen,
  et~al.]{entropy-1997}
Beirlant, J., Dudewicz, E.~J., Gy{\"o}rfi, L., Van~der Meulen, E.~C., et~al.
\newblock Nonparametric entropy estimation: An overview.
\newblock \emph{International Journal of Mathematical and Statistical
  Sciences}, 6\penalty0 (1):\penalty0 17--39, 1997.

\bibitem[Belghazi et~al.(2018)Belghazi, Baratin, Rajeshwar, Ozair, Bengio,
  Courville, and Hjelm]{mine-2018}
Belghazi, M.~I., Baratin, A., Rajeshwar, S., Ozair, S., Bengio, Y., Courville,
  A., and Hjelm, D.
\newblock Mutual information neural estimation.
\newblock In \emph{International Conference on Machine Learning}, volume~80,
  pp.\  531--540, 2018.

\bibitem[Burda et~al.(2019)Burda, Edwards, Storkey, and Klimov]{burda2018rnd}
Burda, Y., Edwards, H., Storkey, A., and Klimov, O.
\newblock Exploration by random network distillation.
\newblock In \emph{International Conference on Learning Representations}, 2019.

\bibitem[Campos et~al.(2020)Campos, Trott, Xiong, Socher, Gir{\'o}-i Nieto, and
  Torres]{EDL-2020}
Campos, V., Trott, A., Xiong, C., Socher, R., Gir{\'o}-i Nieto, X., and Torres,
  J.
\newblock Explore, discover and learn: Unsupervised discovery of state-covering
  skills.
\newblock In \emph{International Conference on Machine Learning}, pp.\
  1317--1327. PMLR, 2020.

\bibitem[Caron et~al.(2020)Caron, Misra, Mairal, Goyal, Bojanowski, and
  Joulin]{SwAV}
Caron, M., Misra, I., Mairal, J., Goyal, P., Bojanowski, P., and Joulin, A.
\newblock Unsupervised learning of visual features by contrasting cluster
  assignments.
\newblock \emph{Advances in Neural Information Processing Systems},
  33:\penalty0 9912--9924, 2020.

\bibitem[Chen et~al.(2020)Chen, Kornblith, Norouzi, and Hinton]{simclr}
Chen, T., Kornblith, S., Norouzi, M., and Hinton, G.
\newblock A simple framework for contrastive learning of visual
  representations.
\newblock In \emph{International Conference on Machine Learning}, volume 119,
  pp.\  1597--1607, 2020.

\bibitem[Cobbe et~al.(2020)Cobbe, Hesse, Hilton, and Schulman]{Procgen}
Cobbe, K., Hesse, C., Hilton, J., and Schulman, J.
\newblock Leveraging procedural generation to benchmark reinforcement learning.
\newblock In \emph{International Conference on Machine Learning}, pp.\
  2048--2056. PMLR, 2020.

\bibitem[Di~Marzio et~al.(2019)Di~Marzio, Fensore, Panzera, and
  Taylor]{kde-2019}
Di~Marzio, M., Fensore, S., Panzera, A., and Taylor, C.~C.
\newblock Kernel density classification for spherical data.
\newblock \emph{Statistics \& Probability Letters}, 144:\penalty0 23--29, 2019.

\bibitem[Eysenbach et~al.(2019)Eysenbach, Gupta, Ibarz, and Levine]{diayn}
Eysenbach, B., Gupta, A., Ibarz, J., and Levine, S.
\newblock Diversity is all you need: Learning skills without a reward function.
\newblock In \emph{International Conference on Learning Representations}, 2019.

\bibitem[Eysenbach et~al.(2022)Eysenbach, Zhang, Levine, and
  Salakhutdinov]{eysenbach2020goal-conditioned}
Eysenbach, B., Zhang, T., Levine, S., and Salakhutdinov, R.
\newblock Contrastive learning as goal-conditioned reinforcement learning.
\newblock In \emph{Advances in Neural Information Processing Systems}, 2022.

\bibitem[Fan \& Li(2022)Fan and Li]{MIB-2022}
Fan, J. and Li, W.
\newblock Dribo: Robust deep reinforcement learning via multi-view information
  bottleneck.
\newblock In \emph{International Conference on Machine Learning}, pp.\
  6074--6102. PMLR, 2022.

\bibitem[Federici et~al.(2020)Federici, Dutta, Forré, Kushman, and
  Akata]{MIB-2020}
Federici, M., Dutta, A., Forré, P., Kushman, N., and Akata, Z.
\newblock Learning robust representations via multi-view information
  bottleneck.
\newblock In \emph{International Conference on Learning Representations}, 2020.

\bibitem[Florensa et~al.(2017)Florensa, Duan, and
  Abbeel]{florensa2017stochastic}
Florensa, C., Duan, Y., and Abbeel, P.
\newblock Stochastic neural networks for hierarchical reinforcement learning.
\newblock In \emph{International Conference on Learning Representations}, 2017.

\bibitem[Gregor et~al.(2016)Gregor, Rezende, and Wierstra]{vic}
Gregor, K., Rezende, D.~J., and Wierstra, D.
\newblock Variational intrinsic control.
\newblock \emph{arXiv preprint arXiv:1611.07507}, 2016.

\bibitem[Grill et~al.(2020)Grill, Strub, Altch{\'e}, Tallec, Richemond,
  Buchatskaya, Doersch, Avila~Pires, Guo, Gheshlaghi~Azar,
  et~al.]{grill2020bootstrap}
Grill, J.-B., Strub, F., Altch{\'e}, F., Tallec, C., Richemond, P.,
  Buchatskaya, E., Doersch, C., Avila~Pires, B., Guo, Z., Gheshlaghi~Azar, M.,
  et~al.
\newblock Bootstrap your own latent-a new approach to self-supervised learning.
\newblock \emph{Advances in Neural Information Processing Systems},
  33:\penalty0 21271--21284, 2020.

\bibitem[Hao et~al.(2023)Hao, Yang, Tang, Bai, Liu, Meng, Liu, and Wang]{bai-1}
Hao, J., Yang, T., Tang, H., Bai, C., Liu, J., Meng, Z., Liu, P., and Wang, Z.
\newblock Exploration in deep reinforcement learning: From single-agent to
  multiagent domain.
\newblock \emph{IEEE Transactions on Neural Networks and Learning Systems},
  2023.

\bibitem[He et~al.(2020)He, Fan, Wu, Xie, and Girshick]{he2020moco}
He, K., Fan, H., Wu, Y., Xie, S., and Girshick, R.
\newblock Momentum contrast for unsupervised visual representation learning.
\newblock In \emph{IEEE/CVF conference on Computer Vision and Pattern
  Recognition}, pp.\  9729--9738, 2020.

\bibitem[Jiang et~al.(2022)Jiang, Gao, and Chen]{recurrent-2022}
Jiang, Z., Gao, J., and Chen, J.
\newblock Unsupervised skill discovery via recurrent skill training.
\newblock In Oh, A.~H., Agarwal, A., Belgrave, D., and Cho, K. (eds.),
  \emph{Advances in Neural Information Processing Systems}, 2022.

\bibitem[Kalashnikov et~al.(2021)Kalashnikov, Varley, Chebotar, Swanson,
  Jonschkowski, Finn, Levine, and Hausman]{MT-OPT}
Kalashnikov, D., Varley, J., Chebotar, Y., Swanson, B., Jonschkowski, R., Finn,
  C., Levine, S., and Hausman, K.
\newblock Mt-opt: Continuous multi-task robotic reinforcement learning at
  scale.
\newblock \emph{arXiv preprint arXiv:2104.08212}, 2021.

\bibitem[Laskin et~al.(2020)Laskin, Srinivas, and Abbeel]{srinivas2020curl}
Laskin, M., Srinivas, A., and Abbeel, P.
\newblock Curl: Contrastive unsupervised representations for reinforcement
  learning.
\newblock In \emph{International Conference on Machine Learning}, pp.\
  5639--5650. PMLR, 2020.

\bibitem[Laskin et~al.(2021)Laskin, Yarats, Liu, Lee, Zhan, Lu, Cang, Pinto,
  and Abbeel]{URLB}
Laskin, M., Yarats, D., Liu, H., Lee, K., Zhan, A., Lu, K., Cang, C., Pinto,
  L., and Abbeel, P.
\newblock {URLB}: Unsupervised reinforcement learning benchmark.
\newblock In \emph{Neural Information Processing Systems (Datasets and
  Benchmarks Track)}, 2021.

\bibitem[Laskin et~al.(2022)Laskin, Liu, Peng, Yarats, Rajeswaran, and
  Abbeel]{cic}
Laskin, M., Liu, H., Peng, X.~B., Yarats, D., Rajeswaran, A., and Abbeel, P.
\newblock Unsupervised reinforcement learning with contrastive intrinsic
  control.
\newblock In \emph{Advances in Neural Information Processing Systems}, 2022.

\bibitem[Lee et~al.(2020)Lee, Hwangbo, Wellhausen, Koltun, and
  Hutter]{quad-science-2020}
Lee, J., Hwangbo, J., Wellhausen, L., Koltun, V., and Hutter, M.
\newblock Learning quadrupedal locomotion over challenging terrain.
\newblock \emph{Science Robotics}, 5\penalty0 (47):\penalty0 eabc5986, 2020.

\bibitem[Lee et~al.(2019)Lee, Eysenbach, Parisotto, Xing, Levine, and
  Salakhutdinov]{lee2019smm}
Lee, L., Eysenbach, B., Parisotto, E., Xing, E., Levine, S., and Salakhutdinov,
  R.
\newblock Efficient exploration via state marginal matching.
\newblock \emph{arXiv preprint arXiv:1906.05274}, 2019.

\bibitem[Lillicrap et~al.(2016)Lillicrap, Hunt, Pritzel, Heess, Erez, Tassa,
  Silver, and Wierstra]{ddpg}
Lillicrap, T.~P., Hunt, J.~J., Pritzel, A., Heess, N., Erez, T., Tassa, Y.,
  Silver, D., and Wierstra, D.
\newblock Continuous control with deep reinforcement learning.
\newblock In \emph{International Conference on Learning Representations}, 2016.

\bibitem[Liu et~al.(2021)Liu, Zhang, Zhao, Qin, Zhu, Jian, Yu, and
  Liu]{liu2021Return-basedCL}
Liu, G., Zhang, C., Zhao, L., Qin, T., Zhu, J., Jian, L., Yu, N., and Liu,
  T.-Y.
\newblock Return-based contrastive representation learning for reinforcement
  learning.
\newblock In \emph{International Conference on Learning Representations}, 2021.

\bibitem[Liu \& Abbeel(2021{\natexlab{a}})Liu and Abbeel]{apt}
Liu, H. and Abbeel, P.
\newblock Behavior from the void: Unsupervised active pre-training.
\newblock In \emph{Advances in Neural Information Processing Systems},
  volume~34, pp.\  18459--18473, 2021{\natexlab{a}}.

\bibitem[Liu \& Abbeel(2021{\natexlab{b}})Liu and Abbeel]{liu2021aps}
Liu, H. and Abbeel, P.
\newblock Aps: Active pretraining with successor features.
\newblock In \emph{International Conference on Machine Learning}, pp.\
  6736--6747. PMLR, 2021{\natexlab{b}}.

\bibitem[Mazoure et~al.(2020)Mazoure, Tachet~des Combes, DOAN, Bachman, and
  Hjelm]{infomax-2020}
Mazoure, B., Tachet~des Combes, R., DOAN, T.~L., Bachman, P., and Hjelm, R.~D.
\newblock Deep reinforcement and infomax learning.
\newblock In \emph{Advances in Neural Information Processing Systems},
  volume~33, 2020.

\bibitem[Miki et~al.(2022)Miki, Lee, Hwangbo, Wellhausen, Koltun, and
  Hutter]{quad-science-2022}
Miki, T., Lee, J., Hwangbo, J., Wellhausen, L., Koltun, V., and Hutter, M.
\newblock Learning robust perceptive locomotion for quadrupedal robots in the
  wild.
\newblock \emph{Science Robotics}, 7\penalty0 (62):\penalty0 eabk2822, 2022.

\bibitem[Mnih et~al.(2015)Mnih, Kavukcuoglu, Silver, Rusu, Veness, Bellemare,
  Graves, Riedmiller, Fidjeland, Ostrovski, et~al.]{atari}
Mnih, V., Kavukcuoglu, K., Silver, D., Rusu, A.~A., Veness, J., Bellemare,
  M.~G., Graves, A., Riedmiller, M., Fidjeland, A.~K., Ostrovski, G., et~al.
\newblock Human-level control through deep reinforcement learning.
\newblock \emph{Nature}, 518\penalty0 (7540):\penalty0 529--533, 2015.

\bibitem[Oord et~al.(2018)Oord, Li, and Vinyals]{cl-2018}
Oord, A. v.~d., Li, Y., and Vinyals, O.
\newblock Representation learning with contrastive predictive coding.
\newblock \emph{arXiv preprint arXiv:1807.03748}, 2018.

\bibitem[Park et~al.(2022)Park, Choi, Kim, Lee, and Kim]{LSD-2022}
Park, S., Choi, J., Kim, J., Lee, H., and Kim, G.
\newblock Lipschitz-constrained unsupervised skill discovery.
\newblock In \emph{International Conference on Learning Representations}, 2022.

\bibitem[Pathak et~al.(2017)Pathak, Agrawal, Efros, and
  Darrell]{pathak2017curiosity}
Pathak, D., Agrawal, P., Efros, A.~A., and Darrell, T.
\newblock Curiosity-driven exploration by self-supervised prediction.
\newblock In \emph{International Conference on Machine Learning}, pp.\
  2778--2787. PMLR, 2017.

\bibitem[Pathak et~al.(2019)Pathak, Gandhi, and Gupta]{pathak2019disagreement}
Pathak, D., Gandhi, D., and Gupta, A.
\newblock Self-supervised exploration via disagreement.
\newblock In \emph{International Conference on Machine Learning}, pp.\
  5062--5071. PMLR, 2019.

\bibitem[Poole et~al.(2019)Poole, Ozair, Van Den~Oord, Alemi, and
  Tucker]{on-MI-bound}
Poole, B., Ozair, S., Van Den~Oord, A., Alemi, A., and Tucker, G.
\newblock On variational bounds of mutual information.
\newblock In \emph{International Conference on Machine Learning}, pp.\
  5171--5180. PMLR, 2019.

\bibitem[Qiu et~al.(2022)Qiu, Wang, Bai, Yang, and
  Wang]{qiu2022Contrastive_ucb}
Qiu, S., Wang, L., Bai, C., Yang, Z., and Wang, Z.
\newblock Contrastive ucb: Provably efficient contrastive self-supervised
  learning in online reinforcement learning.
\newblock In \emph{International Conference on Machine Learning}, pp.\
  18168--18210. PMLR, 2022.

\bibitem[Radford et~al.(2021)Radford, Kim, Hallacy, Ramesh, Goh, Agarwal,
  Sastry, Askell, Mishkin, Clark, et~al.]{CLIP}
Radford, A., Kim, J.~W., Hallacy, C., Ramesh, A., Goh, G., Agarwal, S., Sastry,
  G., Askell, A., Mishkin, P., Clark, J., et~al.
\newblock Learning transferable visual models from natural language
  supervision.
\newblock In \emph{International Conference on Machine Learning}, pp.\
  8748--8763. PMLR, 2021.

\bibitem[Schrittwieser et~al.(2020)Schrittwieser, Antonoglou, Hubert, Simonyan,
  Sifre, Schmitt, Guez, Lockhart, Hassabis, Graepel, et~al.]{Go}
Schrittwieser, J., Antonoglou, I., Hubert, T., Simonyan, K., Sifre, L.,
  Schmitt, S., Guez, A., Lockhart, E., Hassabis, D., Graepel, T., et~al.
\newblock Mastering atari, go, chess and shogi by planning with a learned
  model.
\newblock \emph{Nature}, 588\penalty0 (7839):\penalty0 604--609, 2020.

\bibitem[Sermanet et~al.(2018)Sermanet, Lynch, Chebotar, Hsu, Jang, Schaal,
  Levine, and Brain]{sermanet2018time-contrastive}
Sermanet, P., Lynch, C., Chebotar, Y., Hsu, J., Jang, E., Schaal, S., Levine,
  S., and Brain, G.
\newblock Time-contrastive networks: Self-supervised learning from video.
\newblock In \emph{2018 IEEE international Conference on Robotics and
  Automation (ICRA)}, pp.\  1134--1141. IEEE, 2018.

\bibitem[Sharma et~al.(2020)Sharma, Gu, Levine, Kumar, and Hausman]{dads}
Sharma, A., Gu, S., Levine, S., Kumar, V., and Hausman, K.
\newblock Dynamics-aware unsupervised discovery of skills.
\newblock In \emph{International Conference on Learning Representations}, 2020.

\bibitem[Strouse et~al.(2022)Strouse, Baumli, Warde-Farley, Mnih, and
  Hansen]{disdain}
Strouse, D., Baumli, K., Warde-Farley, D., Mnih, V., and Hansen, S.~S.
\newblock Learning more skills through optimistic exploration.
\newblock In \emph{International Conference on Learning Representations}, 2022.

\bibitem[Sutton \& Barto(2018)Sutton and Barto]{RLBook-2018}
Sutton, R.~S. and Barto, A.~G.
\newblock \emph{Reinforcement learning: An introduction}.
\newblock MIT press, 2018.

\bibitem[Tassa et~al.(2018)Tassa, Doron, Muldal, Erez, Li, Casas, Budden,
  Abdolmaleki, Merel, Lefrancq, et~al.]{dmc}
Tassa, Y., Doron, Y., Muldal, A., Erez, T., Li, Y., Casas, D. d.~L., Budden,
  D., Abdolmaleki, A., Merel, J., Lefrancq, A., et~al.
\newblock Deepmind control suite.
\newblock \emph{arXiv preprint arXiv:1801.00690}, 2018.

\bibitem[Wang \& Liu(2021)Wang and Liu]{wang2021understanding}
Wang, F. and Liu, H.
\newblock Understanding the behaviour of contrastive loss.
\newblock In \emph{IEEE/CVF Conference on Computer Vision and Pattern
  Recognition}, pp.\  2495--2504, 2021.

\bibitem[Wang \& Isola(2020)Wang and Isola]{wang2020alignment_and_uniformity}
Wang, T. and Isola, P.
\newblock Understanding contrastive representation learning through alignment
  and uniformity on the hypersphere.
\newblock In \emph{International Conference on Machine Learning}, pp.\
  9929--9939. PMLR, 2020.

\bibitem[Wu et~al.(2022)Wu, Escontrela, Hafner, Abbeel, and
  Goldberg]{daydreamer}
Wu, P., Escontrela, A., Hafner, D., Abbeel, P., and Goldberg, K.
\newblock Daydreamer: World models for physical robot learning.
\newblock In \emph{Annual Conference on Robot Learning}, 2022.

\bibitem[Yarats et~al.(2021)Yarats, Fergus, Lazaric, and Pinto]{proto}
Yarats, D., Fergus, R., Lazaric, A., and Pinto, L.
\newblock Reinforcement learning with prototypical representations.
\newblock In \emph{International Conference on Machine Learning}, pp.\
  11920--11931. PMLR, 2021.

\bibitem[Yuan et~al.(2022)Yuan, Xue, Yuan, Wang, Wu, Gao, and Xu]{pieg}
Yuan, Z., Xue, Z., Yuan, B., Wang, X., Wu, Y., Gao, Y., and Xu, H.
\newblock Pre-trained image encoder for generalizable visual reinforcement
  learning.
\newblock In Oh, A.~H., Agarwal, A., Belgrave, D., and Cho, K. (eds.),
  \emph{Advances in Neural Information Processing Systems}, 2022.

\bibitem[Zhao et~al.(2021)Zhao, Gao, Abbeel, Tresp, and Xu]{MUSIC}
Zhao, R., Gao, Y., Abbeel, P., Tresp, V., and Xu, W.
\newblock Mutual information state intrinsic control.
\newblock In \emph{International Conference on Learning Representations}, 2021.

\end{thebibliography}
\bibliographystyle{icml2023}

%%%%%%%%%%%%%%%%%%%%%%%%%%%%%%%%%%%%%%%%%%%%%%%%%%%%%%%%%%%%%%%%%%%%%%%%%%%%%%%
%%%%%%%%%%%%%%%%%%%%%%%%%%%%%%%%%%%%%%%%%%%%%%%%%%%%%%%%%%%%%%%%%%%%%%%%%%%%%%%
% APPENDIX
%%%%%%%%%%%%%%%%%%%%%%%%%%%%%%%%%%%%%%%%%%%%%%%%%%%%%%%%%%%%%%%%%%%%%%%%%%%%%%%
%%%%%%%%%%%%%%%%%%%%%%%%%%%%%%%%%%%%%%%%%%%%%%%%%%%%%%%%%%%%%%%%%%%%%%%%%%%%%%%
\newpage
\appendix
\onecolumn
\icmltitle{Appendix}

\section{Theoretical Proof}\label{app-theory}

\subsection{Proof of the MI Decomposition}

Under Assumption~\ref{assump-2}, we recall that Eq.~\eqref{eq:mi-multivariate} is 
\begin{equation}\nonumber
I(S^{(1)};Z)=I(S^{(2)};Z)=I(S^{(1)};S^{(2)};Z).
\end{equation}

\begin{proof}
For random variables $X$, $Y$ and $Z$, the chain rule for multivariate MI is 
\begin{equation}
I(X;Y;Z)=I(Y;Z)-I(Y;Z|X).
\end{equation}

Thus, for $S^{(1)}$, $S^{(2)}$, and $Z$, we have the similar relationship as
\begin{equation}
I(S^{(1)};Z)=I(S^{(1)};S^{(2)};Z)+I(S^{(1)};Z|S^{(2)}).
\end{equation}
According to the redundancy assumption in Assumption~\ref{assump-2}, we have 
\begin{equation}
I(S^{(1)};Z\:\big|\:S^{(2)})=0,
\end{equation}
and then we have 
\begin{equation}
I(S^{(1)};Z)=I(S^{(1)};S^{(2)};Z).
\end{equation}
Following a similar proof, we have 
\begin{equation}
I(S^{(2)};Z)=I(S^{(1)};S^{(2)};Z),
\end{equation}
which conclude our proof.
\end{proof}

\subsection{Proof of Theorem~\ref{thm:nec-upper}}

\begin{theorem*}[Theorem~\ref{thm:nec-upper} restate]
The relationship between our MI objective in Eq.~\eqref{eq:mi-multivariate} and the contrastive loss defined in Eq.~\eqref{eq:nce} is 
\begin{equation}
I_{\rm BeCL} = I(S^{(1)};S^{(2)}) \geq \log N - L_{\rm BeCL1},
\end{equation}
where $N=2m-1$ and $m$ is the number of sampled skills.
\end{theorem*}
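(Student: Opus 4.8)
The plan is to recognize Eq.~\eqref{eq:nce} as the standard InfoNCE (contrastive predictive coding) objective and to invoke the corresponding variational lower bound on mutual information. First I would make the sampling structure explicit: the positive pair $(s_i^{(1)}, s_i^{(2)})$ is drawn from the joint distribution $p(s^{(1)}, s^{(2)})$ induced by two trajectories of the same skill $z_i$, whereas each negative state $s_j \in S^{-}$ is drawn independently from the state marginal (it comes from a different skill $z_j \neq z_i$, hence is independent of $s_i^{(1)}$). The softmax denominator $\sum_{s_j \in S^{-} \cup s_i^{(2)}} h(s_i^{(1)}, s_j)$ therefore contains exactly one positive candidate and $2(m-1)$ negative candidates, i.e. $N = 2m - 1$ terms in total, which pins down the constant $\log N$.

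Next I would identify the Bayes-optimal score function. Viewing $L_{\rm BeCL1}$ as the categorical cross-entropy of classifying which of the $N$ candidates is the positive, the minimizer over all admissible $h$ is the posterior probability of being the positive, which by Bayes' rule is proportional to the pointwise density ratio $h^\star(s^{(1)}, s^{(2)}) \propto \frac{p(s^{(1)}, s^{(2)})}{p(s^{(1)})\, p(s^{(2)})}$. Since $L_{\rm BeCL1}(h) \geq L_{\rm BeCL1}(h^\star)$ for every $h$, it suffices to establish the bound at $h^\star$, so the problem reduces to evaluating the loss at the density-ratio critic.

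Substituting $h^\star$ and separating the logarithm of the ratio, one term becomes $\EE_{p(s^{(1)}, s^{(2)})}\big[\log \frac{p(s^{(1)}, s^{(2)})}{p(s^{(1)})\, p(s^{(2)})}\big] = I(S^{(1)}; S^{(2)})$, which produces the $-I_{\rm BeCL}$ contribution, while the remaining term is the logarithm of a sum of $N$ density ratios. Using that a density ratio has expectation one under the marginal, $\EE_{p(s^{(2)})}\big[\frac{p(s^{(1)}, s^{(2)})}{p(s^{(1)})\, p(s^{(2)})}\big] = 1$, the sum of the $N-1$ negative ratios concentrates around $N-1$ and the positive ratio contributes a further unit of mass, yielding the constant $\log N$; rearranging $L_{\rm BeCL1} \geq L_{\rm BeCL1}(h^\star) \geq \log N - I(S^{(1)}; S^{(2)})$ then gives the claim. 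The main obstacle is exactly this last step: replacing the empirical sum of negative density ratios by its mean $(N-1)$ is only an approximation at finite $N$ (a naive application of Jensen's inequality to the concave logarithm runs in the wrong direction), so to make the inequality fully rigorous I would route the argument through the multi-sample estimator $I_{\rm NCE}$, which is a genuine lower bound on $I(S^{(1)}; S^{(2)})$ for every $N$ and equals $\log N - L_{\rm BeCL1}$ by direct algebra.
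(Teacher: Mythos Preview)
Your proposal is correct and lands on the same InfoNCE/CPC-style argument the paper uses, but the two differ in how they handle the two delicate steps. The paper works forward from $I(S^{(1)};S^{(2)})-\log N$, rewrites it as $\EE\big[-\log\big(\tfrac{p(s^{(1)})}{p(s^{(1)}|s^{(2)})}+\tfrac{p(s^{(1)})}{p(s^{(1)}|s^{(2)})}(N-1)\big)\big]$, and then (i) invokes Assumption~\ref{assump-2} to claim $p(s^{(1)}\mid s^{(2)})\geq p(s^{(1)})$ so the first summand is at most $1$, and (ii) replaces $(N-1)$ by the empirical sum $\sum_{s_j\in S^-} p(s^{(1)}_i\mid s_j)/p(s^{(1)}_i)$ inside the logarithm, writing this as an equality. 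The endpoint is exactly your density-ratio critic $h(x_1,x_2)=p(x_1\mid x_2)/p(x_1)$. Your route differs in that you never invoke the redundancy assumption for step (i), and you explicitly flag step (ii) as the place where a naive Jensen argument points the wrong way---the paper simply passes over it. Your proposed fix, appealing directly to the multi-sample $I_{\rm NCE}$ lower bound (which holds for every critic $h$ and every finite $N$), is the cleaner and more rigorous resolution; it also makes your intermediate reduction ``$L(h)\geq L(h^\star)$, so it suffices to check $h^\star$'' unnecessary, since the $I_{\rm NCE}$ bound already applies to the learned critic.
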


\begin{proof} 
We rewrite $L_{\rm BeCL1}$ defined in Eq.~\eqref{eq:nce} with $m$ discrete skills as
\begin{equation}\nonumber
L_{\rm BeCL1}= \EE_{i\in[m],s_i^{(1)},s_i^{(2)},s_j}\left[ -\log \:\: \frac{h(s^{(1)}_i,s^{(2)}_i)}{\sum_{s_j \in S^{-}\bigcup s^{(2)}_i}h(s^{(1)}_i,s_j)}\right],
\end{equation}

Following the definition of MI, we have
\begin{equation}
\begin{aligned}
I(S^{(1)};S^{(2)})-\log N &=
\EE_{i\in[m],s^{(1)}_i,s^{(2)}_i}\left[\log \frac{p(s^{(1)}_i s^{(2)}_i)}{p(s^{(1)}_i) p(s^{(2)}_i)}\right] - \log{N} \\
&= \EE_{i\in[m],s^{(1)}_i,s^{(2)}_i}\left[\log \frac{p(s^{(1)}_i|s^{(2)}_i)}{p(s^{(1)}_i)}\right] - \log{N} \\
&= \EE_{i\in[m],s^{(1)}_i,s^{(2)}_i}\left[\log \frac{1}{\frac{p(s^{(1)}_i)}{p(s^{(1)}_i|s^{(2)}_i)}N}\right] \\
&= \EE_{i\in[m],s^{(1)}_i,s^{(2)}_i}\left[\log \frac{1}{\frac{p(s^{(1)}_i)}{p(s^{(1)}_i|s^{(2)}_i)} + \frac{p(s^{(1)}_i)}{p(s^{(1)}_i|s^{(2)}_i)}(N-1)}\right] \\
\end{aligned}
\end{equation}

Under Assumption~\ref{assump-2}, since $S^{(1)}$ and $S^{(2)}$ share the same information about the skill, we have $p(s^{(1)}|s^{(2)})\geq p(s^{(1)})$. Then
\begin{align}
I(S^{(1)};S^{(2)})-\log N & \ge \EE_{i\in[m],s^{(1)}_i,s^{(2)}_i}\left[\log \frac{1}{1 + \frac{p(s^{(1)}_i)}{p(s^{(1)}_i|s^{(2)}_i)}(N-1)}\right] \\
&= \EE_{i\in[m],s^{(1)}_i,s^{(2)}_i}\left[-\log \left(1 + \frac{p(s^{(1)}_i)}{p(s^{(1)}_i|s^{(2)}_i)}(N-1)\right)\right]
\label{eq:app-nega}.
\end{align}

Considering we sample $s_j\in S^{-}$ that is independent to $s^{(1)}_i$ (i.e., $i\neq j$), we have $p(s^{(1)}_i|s_j)=p(s^{(1)}_i)$. Formally, we have 
\begin{equation}
\label{eq:app-sj}
\EE_{j\in [m] \setminus \{i\}, s_j\in S^{-}}\left[\frac{p(s^{(1)}_i|s_j)}{p(s^{(1)}_i)}\right]=1,
\end{equation}
where $S^{-}$ contains $N-1$ negative examples. Plugging \eqref{eq:app-sj} into \eqref{eq:app-nega}, we have
\begin{equation}
\begin{aligned}
I(S^{(1)};S^{(2)})-\log N  & \ge \EE_{i\in[m],s^{(1)}_i,s^{(2)}_i}\left[-\log \left(1 + \frac{p(s^{(1)}_i)}{p(s^{(1)}_i|s^{(2)}_i)}(N-1)\right)\right] \\
&= \EE_{i\in[m],s^{(1)}_i,s^{(2)}_i}\left[-\log \left(1 + \frac{p(s^{(1)}_i)}{p(s^{(1)}_i|s^{(2)}_i)}(N-1) \EE_{ S^{-}}\left[\frac{p(s^{(1)}_i|s_j)}{p(s^{(1)}_i)}\right]\right)\right]
\\
&= \EE_{i\in[m],s^{(1)}_i,s^{(2)}_i,s_j}\left[-\log \left(1 + \frac{p(s^{(1)}_i)}{p(s^{(1)}_i|s^{(2)}_i)}\sum_{s_j\in S^{-}}\frac{p(s^{(1)}_i|s_j)}{p(s^{(1)}_i)}\right)\right]\\
&= \EE_{i\in[m],s^{(1)}_i,s^{(2)}_i,s_j}\left[-\log\left(\frac{p(s^{(1)}_i|s^{(2)}_i)+p(s^{(1)}_i)\sum_{s_j\in S^{-}}\frac{p(s^{(1)}_i|s_j)}{p(s^{(1)}_i)}}{p(s^{(1)}_i|s^{(2)}_i)}\right)\right]\\
&= \EE_{i\in[m],s^{(1)}_i,s^{(2)}_i,s_j}\left[\log\left(\frac{\frac{p(s^{(1)}_i|s^{(2)}_i)}{p(s^{(1)}_i)}}{\frac{p(s^{(1)}_i|s^{(2)}_i)}{p(s^{(1)}_i)}+\sum_{s_j\in S^{-}}\frac{p(s^{(1)}_i|s_j)}{p(s^{(1)}_i)}}\right)\right]\\
&= \EE_{i\in[m],s^{(1)}_i,s^{(2)}_i,s_j} \left[ \log \left(\frac{h(s^{(1)}_i,s^{(2)}_i)}{h(s^{(1)}_i,s^{(2)}_i) + \sum_{s_j\sim S^{-}} h(s^{(1)}_i,s_j)}\right) \right],
\end{aligned}
\end{equation}
where $h(x_1, x_2)=p(x_1|x_2) / p(x_1)$ is the score function that preserves the mutual information between $x_1$ and $x_2$. In practice, we use a neural network to represent the score function.  

\end{proof}

\subsection{Proof of Theorem~\ref{thm:entropy}}

\begin{theorem*}[Restate of Theorem~\ref{thm:entropy}]
With sufficient negative samples, minimizing $L_{\rm BeCL2}$ can maximize the state entropy, as
\begin{equation}
\!\!\lim_{N\rightarrow \infty}\!L_{\rm BeCL2} \!=\! 
-\frac{1}{\kappa} \EE_{s_i}[f(s^{(1)}_i)^\top f(s^{(2)}_i)] - \hat{H}\big(f(s)\big) + \log C,
\end{equation}
where $\hat{H}(\cdot)$ is a resubstitution entropy estimator through the von Mises-Fisher (vMF) kernel density estimation, and $\log C$ is a normalization constant. 
\end{theorem*}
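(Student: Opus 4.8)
The plan is to split $L_{\rm BeCL2}$ into an alignment term and a log-partition (uniformity) term, and then to identify the latter with a von Mises-Fisher kernel density estimate. First I would apply $\log(a/b)=\log a-\log b$ to write
\begin{equation}\nonumber
L_{\rm BeCL2} = -\frac{1}{\kappa}\,\EE_{s_i}\big[f(s^{(1)}_i)^\top f(s^{(2)}_i)\big] + \EE_{s^{(1)}_i}\Big[\log \sum_{s_j\in S^{-}\cup\{s^{(2)}_i\}} \exp\big(f(s_j)^\top f(s^{(1)}_i)/\kappa\big)\Big].
\end{equation}
The first term already coincides with the first term of the claimed limit, so all the real work lies in the partition term.

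For the partition term I would argue that as $N\to\infty$ the single positive contribution $\exp\big(f(s^{(2)}_i)^\top f(s^{(1)}_i)/\kappa\big)$ in the denominator is of order $o(N)$ relative to the $N-1$ negative contributions and may be dropped. Factoring out the number of negatives and invoking the law of large numbers gives
\begin{equation}\nonumber
\frac{1}{N-1}\sum_{s_j\in S^{-}}\exp\big(f(s_j)^\top f(s^{(1)}_i)/\kappa\big)\;\longrightarrow\;\EE_{s}\big[\exp\big(f(s)^\top f(s^{(1)}_i)/\kappa\big)\big],
\end{equation}
where moving the limit inside the outer expectation and inside $\log$ is justified because the features lie on the unit sphere (so the scores are bounded) and the resulting density is bounded below by a positive constant on the compact sphere, giving continuity of $\log$ there via dominated convergence.

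Next I would recognize the von Mises-Fisher kernel. The vMF density of a unit vector $x$ with mean direction $\mu$ and concentration $\kappa^{-1}$ is proportional to $\exp(\kappa^{-1}\mu^\top x)$ with a dimension-dependent normalizer $C_d(\kappa^{-1})$, so each score equals $C_d(\kappa^{-1})^{-1}\,p_{\rm vMF}\big(f(s^{(1)}_i);f(s_j)\big)$. Consequently the empirical average above is exactly the resubstitution kernel density estimate $\hat{p}\big(f(s^{(1)}_i)\big)$ of the feature law, up to the factor $C_d(\kappa^{-1})^{-1}$. Substituting the definition of the resubstitution entropy estimator $\hat{H}\big(f(s)\big)=-\EE_s[\log\hat{p}(f(s))]$ then turns $\EE_{s^{(1)}_i}\big[\log\hat{p}(f(s^{(1)}_i))\big]$ into $-\hat{H}\big(f(s)\big)$, producing the second term of the claimed limit. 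Finally I would collect the two leftover constants — the $\log N$ coming from factoring out the number of negatives and the $-\log C_d(\kappa^{-1})$ coming from the vMF normalizer — into the single constant $\log C$, neither of which depends on the encoder $f$.

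The main obstacle I expect is making the $N\to\infty$ step honest: the bare partition term carries a divergent $\log N$, so strictly the result should be phrased as the limit of $L_{\rm BeCL2}-\log N$ (as in the alignment and uniformity analysis of \cite{wang2020alignment_and_uniformity}) rather than of $L_{\rm BeCL2}$ itself, with $\log N$ absorbed into $\log C$ only as a formal, optimization-irrelevant offset. Relatedly, I must verify that discarding the lone positive term from the denominator is genuinely $o(N)$ and that the interchange of the limit with the outer expectation and the logarithm is valid, which is where the compactness of the sphere and the strictly positive lower bound on the vMF-smoothed density do the heavy lifting.
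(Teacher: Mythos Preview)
Your proposal is correct and follows essentially the same route as the paper: split the loss into the alignment term and the log-partition term, factor out the number of negatives, apply the law of large numbers to replace the empirical average by an expectation, identify the resulting exponential kernel as an unnormalized vMF density, and read off the resubstitution entropy estimator plus constants. The paper is slightly less explicit than you are about justifying the limit/log interchange (it just cites the continuous mapping theorem), and it handles the divergence exactly as you anticipated, by working with $L_{\rm BeCL2}-\log M$ (where $M=N-1$) and then absorbing $\log M$ into $\log C$ at the end.
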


\begin{proof}
We rewrite the definition of our contrastive estimator as 
\begin{equation}
\begin{aligned}
\label{eq:nce-tau2}
L_{\rm BeCL2}&=\EE_{s_i,s_j}
\left[-\log\frac{\exp\big(f(s^{(1)}_i)^{\top}f(s^{(2)}_i)/\kappa \big)}{\exp\big(f(s^{(1)}_i)^{\top}f(s^{(2)}_i)/\kappa\big) + \sum_{s_j \in S^{-}}\exp \big(f(s_j)^{\top}f(s^{(1)}_i)/\kappa \big)}\right]\\
&=\EE_{s_i}\left[-\frac{1}{\kappa} \big(f(s^{(1)}_i)^{\top}f(s^{(2)}_i)\right] + \EE_{s_i,s_j}\left[\log \left( \exp\big(f(s^{(1)}_i)^{\top}f(s^{(2)}_i)/\kappa\big) + \sum_{s_j \in S^{-}}\exp \big(f(s_j)^{\top}f(s^{(1)}_i)/\kappa \big) \right)\right]
\end{aligned}
\end{equation}

In the following, we denote $M=N-1$ as the number of negative samples when using $s^{(1)}$ as the anchor state. Then we have
\begin{equation}
\begin{aligned}
&\lim_{M\rightarrow\infty} L_{\rm BeCL2} - \log M  \\
&= \EE_{s_i}\left[-\frac{1}{\kappa} \big(f(s^{(1)}_i)^{\top}f(s^{(2)}_i)\right] + \EE_{s_i} \lim_{M\rightarrow\infty} \left[\log \left( \frac{1}{M}\exp\big(f(s^{(1)}_i)^{\top}f(s^{(2)}_i)/\kappa\big) + \frac{1}{M}\sum_{s_j \in S^{-}}\exp \big(f(s_j)^{\top}f(s^{(1)}_i)/\kappa \big) \right)\right] \\
&= \EE_{s_i}\left[-\frac{1}{\kappa} \big(f(s^{(1)}_i)^{\top}f(s^{(2)}_i)\right] + \EE_{s_i} \lim_{M\rightarrow\infty} \left[\log \left( \frac{1}{M}\sum_{s_j \in S^{-}}\exp \bigg(f(s_j)^{\top}f(s^{(1)}_i)/\kappa \bigg) \right)\right],
\end{aligned}
\end{equation}
where the second equation holds by the strong law of large numbers (SLLN). 

When $M\rightarrow \infty$, the negative sample set $S^{-}$ contains sufficient states to represent the state visitation distribution. As a result, sampling $s_j\in S^{-}$ will be equivalent to sampling $s_j\sim \rho_{\pi}(s)$, where $\rho_{\pi}(s)$ is the state visitation measure of the current policy $\pi$. Then we have
\begin{equation}\label{eq:app-M}
\lim_{M\rightarrow\infty} L_{\rm BeCL2} = \EE_{s_i}\left[-\frac{1}{\kappa} f(s^{(1)}_i)^{\top}f(s^{(2)}_i)\right] +  \EE_{s_i}\Big[\log \EE_{s_j \sim \rho_{\pi}(s)} \big[\exp \big(f(s_j)^{\top}f(s^{(1)}_i)/\kappa \big) \big]\Big] + \log M,
\end{equation}
where we follow Continuous Mapping Theorem with the logarithmic function. 

As we normalize the output of the encoder network to make $\|f(\cdot)\|=1$, the features of states lie on a unit hypersphere
$$\mathbb{S}^{d-1}:\big\{f(s)\in \mathbb{R}^{d}: \|f(s)\|=1\big\}.$$
Kernel density estimation (KDE) is commonly used in the Euclidean setting. In our problem, we use a \textit{spherical kernel} $K$ as a spherical probability density function with a mean direction $\mu\in\mathbb{R}^{d}$ and a concentration parameter $u>0$. Specifically, we adopt the classical von Mises-Fisher (vMF) distribution defined over 
\begin{equation}\label{eq:app-vmf}
K_{\rm vMF}(x;\mu,u) = Z_{\rm vMF}(u)\cdot \exp(u\cdot\mu^{\top}x), \qquad Z_{\rm vMF}(u)=\frac{u^{d/2-1}}{(2\pi)^{d/2}I_{d/2-1}(u)},
\end{equation}
where $I_{\alpha}$ is a modified Bessel function of the first kind with order $\alpha$. Here, $u$ and $\mu\in\mathbb{R}^d$ are the parameters of vMF density with $u\geq 0$ and $\|\mu\|=1$. 

In the following, we denote $s_i=s^{(1)}_i$ for the second term in Eq.~\eqref{eq:app-M} as it only contains a single view that can be sampled from $\rho_\pi (s)$. In the following, we sample $s_i$ with a number of $N_{\rm s}$ to estimate this term. With a sufficiently large number of $N_{\rm s}$ and the vMF kernel defined in Eq.~\eqref{eq:app-vmf}, we have
\begin{equation}\label{eq:app-ent}
\begin{aligned}
\EE_{s_i}\bigg[\log  \EE_{s_j \sim \rho_{\pi}(s)} \Big[\exp \big(f(s_j)^{\top}f(s^{(1)}_i)/\kappa \big) \Big]\bigg] &= \frac{1}{N_{\rm s}} \sum_{i=1}^{N_{\rm s}} \log\left(\frac{1}{M} \sum_{j=1}^{M} \exp\big(f(s_j)^{\top}f(s_i)/\kappa \big) \right) \\
&= \frac{1}{N_{\rm s}} \sum_{i=1}^{N_{\rm s}} \log \hat{p}_{\rm vMF-KDE} \big(f(s_i)\big) + \frac{1}{N_{\rm s}} \sum_{i=1}^{N_{\rm s}} \log Z_{\rm vMF}\big(f(s_i)\big)\\
&=-\hat{H}\big(f(s)\big) + \log Z_{\rm vMF},
\end{aligned}
\end{equation}
where we denote $Z_{\rm vMF}=\prod_{i=1}^{N_{\rm s}}Z_{\rm vMF}(f(s_i))$ as the normalization constant for the vMF distribution. Here, $\hat{p}_{\rm vMF-KDE}$ is the vMF kernel density estimation with a concentration parameter of $u=\kappa^{-1}$. 

With the density estimation in the hypersphere, $\hat{H}$ is a resubstitution entropy estimator based on vMF. Inserting Eq.~\eqref{eq:app-ent} into Eq.~\eqref{eq:app-M} gives us
\begin{equation}
\lim_{M\rightarrow\infty} L_{\rm BeCL2} = \EE_{s_i}\left[-\frac{1}{\kappa} f(s^{(1)}_i)^{\top}f(s^{(2)}_i)\right] - \hat{H}(f(s)) + \log Z_{\rm vMF} + \log M,
\end{equation}
which concludes our proof by setting $C= M\cdot Z_{\rm vMF}$. 

\end{proof}
\clearpage

\section{Additional Experiments in Maze}
\label{app:Additional Implementation in Maze}
% The toy example environments are adapted from the open-source implementation by EDL \url{https://github.com/victorcampos7/edl}. where DIAYN and DADS have been implemented in there. We replicate CIC and BeCL to the maze and remain its hyperparameter and RL backbone same as DIAYN and DIAYN. The main difference between baselines is the optimization loss of network and the design of intrinsic reward.

\subsection{Effect of Skill Dimensions}
\begin{figure}[h]
\begin{center}
\centerline{
\includegraphics[width=0.8\columnwidth]{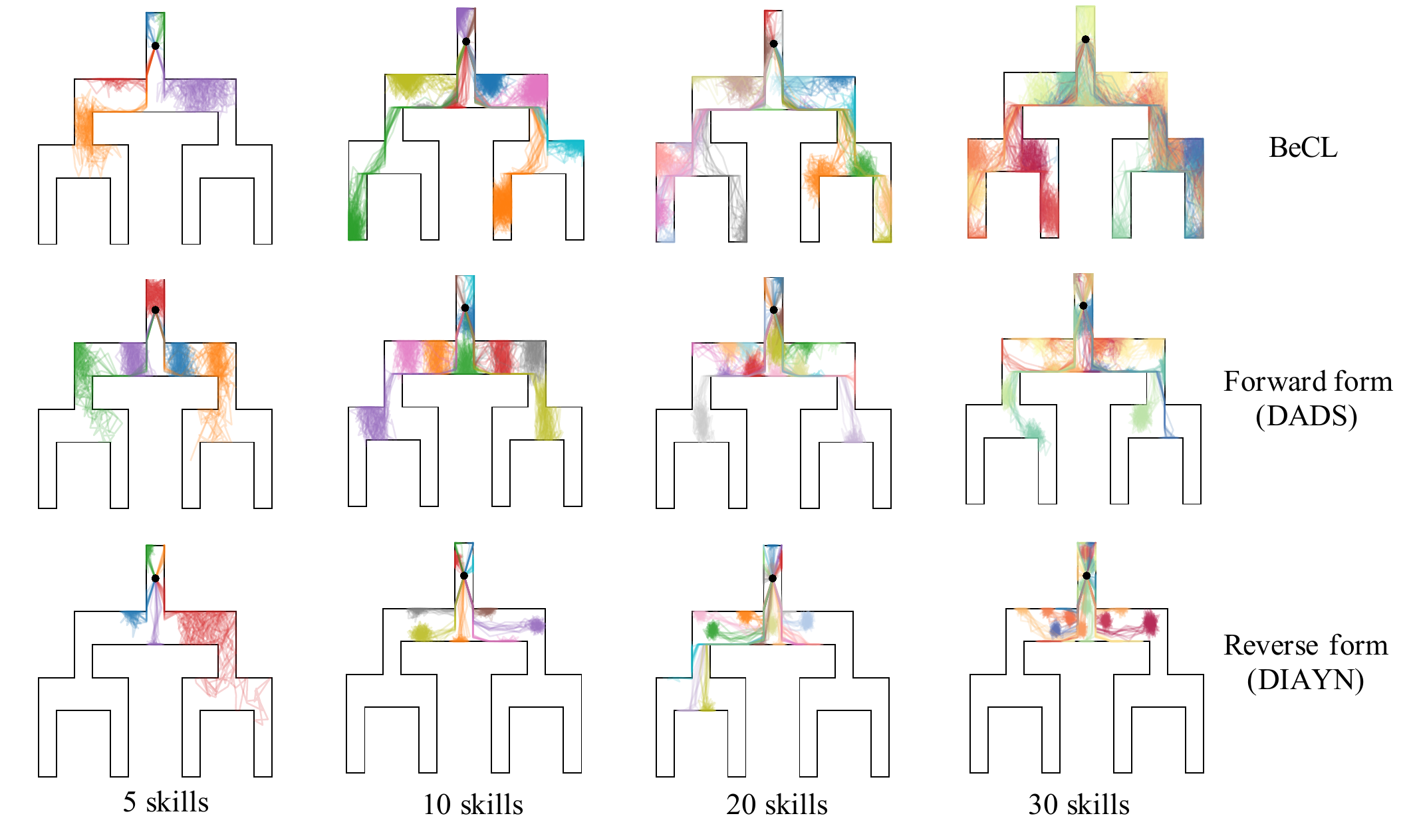}}
\vspace{-1em}
\caption{A illustration of the impact of different numbers of skills in tree-like Mazes. All methods are trained for 2500 episodes, and each episode allows 50 interactions with environment. As the skill dimension increases, skills of DIAYN and DADS still optimize MI in a narrow area of the maze and cannot reach deeper position. In contrast, BeCL skills gradually cover the whole state space by applying more skills to explore the maze and more negative samples to provide a better state-entropy estimator.}
\vspace{-1em}
\label{fig:comparison_skill_dim_maze}
\end{center}
\end{figure}

\subsection{The MI and Entropy Estimation of skill discovery methods in Maze}

\begin{figure}[h]
\begin{center}
\centerline{
\includegraphics[width=0.4\columnwidth]{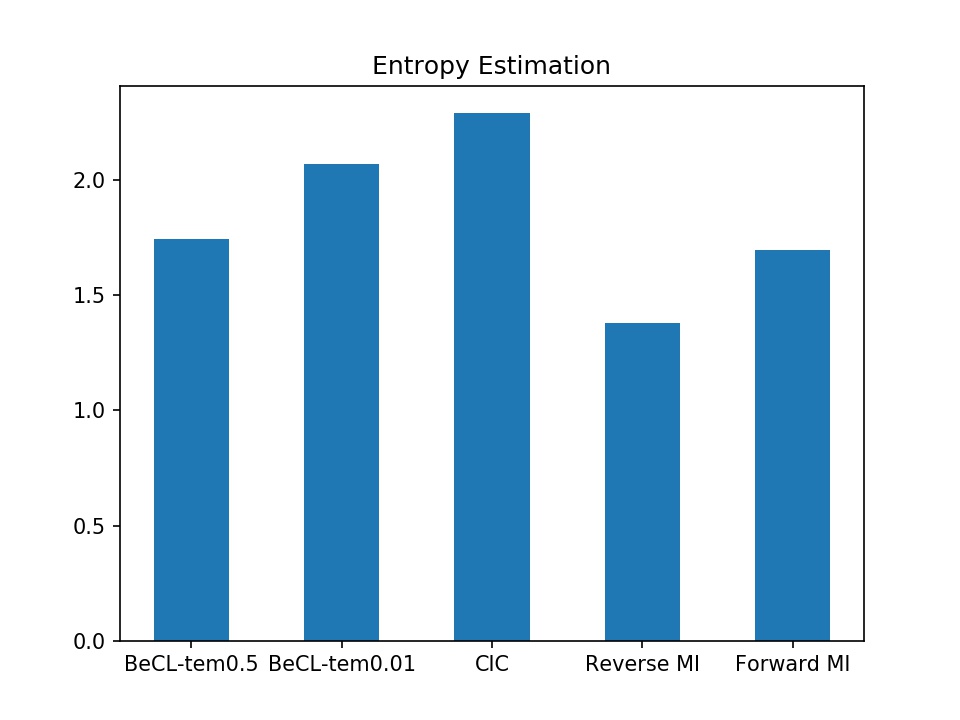}
\includegraphics[width=0.4\columnwidth]{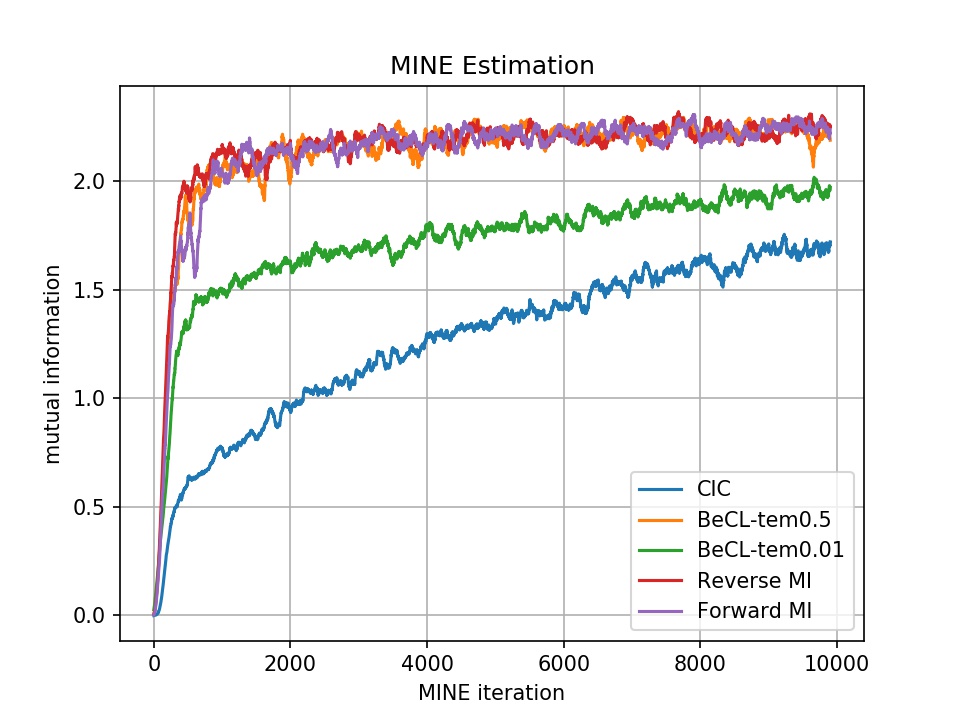}
}
\vspace{-1.5em}
\caption{We compare the mutual information (MI) estimation and the entropy estimation in maze. Specifically, we generate several trajectories for each learned skill and perform MI estimation using MINE \cite{mine-2018} and entropy estimation using the particle-based entropy estimator \cite{apt}. The result shows that CIC obtains much lower MI than other skill discovery methods but obtains the largest state entropy. BeCL can balance state coverage and empowerment with different temperature parameters, which leads to diverse skills and also better state coverage than previous MI-based algorithms.}
\vspace{-2em}
\label{fig:MI and Entropy Estimation}
\end{center}
\end{figure}

\newpage

\section{Additional Experiments in URLB}
\label{app:Additional Experiments in URLB}
\subsection{Hyperparameter}
We adopt the baselines of open source code implemented by URLB (\url{https://github.com/rll-research/url_benchmark}) and CIC (\url{https://github.com/rll-research/cic}). The hyperparameters of the baselines remain unchanged and are fixed in all tasks in the pretraining and finetuning stage. Table \ref{table:common_hyperparams} shows the hyperparameters of BeCL and DDPG. We also refer to the hyperparameters settings of baselines implemented in URLB.

\begin{table}[h]
\caption{\label{table:common_hyperparams} Hyper-parameters used for BeCL and DDPG.}
\centering
\begin{tabular}{lc}
\hline
\textbf{BeCL hyper-parameter}       & \textbf{Value} \\
\hline

\: Skill dim & 16 discrete \\
\: Temperature $\kappa$ & 0.5 \\
\: Skill sampling frequency (steps) & 50 \\
\: Contrastive encoder arch. $f(s)$ & $ \dim (S) \to 1024 \to 1024 \to 16 \to 1024 \to 16$ $\textrm{ReLU}$ (MLP)  \\
\hline
\textbf{DDPG hyper-parameter}       & \textbf{Value} \\
\hline
\: Replay buffer capacity & $10^6$ \\
\: Action repeat & $1$ \\
\: Seed frames & $4000$ \\
\: $n$-step returns & $3$ \\
\: Mini-batch size & $1024$  \\
\: Seed frames & $4000$ \\
\: Discount ($\gamma$) & $0.99$ \\
\: Optimizer & Adam \\
\: Learning rate & $10^{-4}$ \\
\: Agent update frequency & $2$ \\
\: Critic target EMA rate ($\tau_Q$) & $0.01$ \\
\: Features dim. & $1024$  \\
\: Hidden dim. & $1024$ \\
\: Exploration stddev clip & $0.3$ \\
\: Exploration stddev value & $0.2$ \\
\: Number pretraining frames &  $2\times 10^6$ \\
\: Number fineturning frames & $1 \times 10^5$ \\
\hline
\end{tabular}
\end{table}

\subsection{Description of Baselines in URLB}
\label{app:description_of_baselines_in_URLB}

A comparison of different intrinsic rewards and representations in unsupervised RL baselines in our experiments is shown in Table~\ref{table:baselines}. Specifically, knowledge-based baselines utilize a trainable encoder to predict dynamics $f(s_{t+1}|s_t,a_t)$ (e.g., ICM \cite{pathak2017curiosity}, Disagreement \cite{pathak2019disagreement}) or minimize the output error of $f(s_t,a_t)$ and a random network $\tilde{f}(s_t,a_t)$ (e.g., RND \cite{burda2018rnd}); data-based baselines maximize the entropy of collected data on different representations of state $f(s)$ with particle estimator; Competence-based baselines aim to learn latent skill $z$ by maximizing the MI between states and skills: $I(S;Z) = H(S) - H(S|Z) = H(Z) - H(Z|S)$. For example, APS \cite{liu2021aps} optimizes the forward form of $I(S;Z)$ as in DADS \cite{dads} but with successor features, and estimates $H(S)$ with the particle estimator as in APT; DIAYN \cite{diayn} optimizes the reverse form of $I(S;Z)$, and utilizes the non-negativity property of the KL divergence to compute the variational lower bound of $I(S;Z)$ through a trainable network $q$.
The main differences between the baselines in URLB are the design of intrinsic reward and its state representation.  More descriptions of the baselines can be found in URLB \cite{URLB}. Furthermore, BeCL is a competence-based method and trains skills with $I(S^{(1)};S^{(2)})$.

\begin{table}[h]
  \caption{BeCL and other unsupervised RL baselines.}
  \label{table:baselines}
  \centering
\resizebox{\columnwidth}{!}{
  \begin{tabular}{llll}
    \toprule

    Name     & Algo. Type     & Intrinsic Reward & Explicit max $H(s)$\\
    \midrule
    ICM~\cite{pathak2017curiosity} & Knowledge  & $ \| f(s_{t+1}|s_t,a_t) - s_{t+1} \|^2$ & No \\
    Disagreement~\cite{pathak2019disagreement}     & Knowledge & $\textrm{Var} \{ f_i(s_{t+1}|s_t,a_t) \} \quad i =1,\dots,N$   & No   \\
    RND~\cite{burda2018rnd}     & Knowledge       &  $\| f(s_t,a_t) - \tilde{f}(s_t, a_t) \|^2_2$ & No \\

    \midrule 
    APT~\cite{apt} & Data & $ \sum_{j\in \textrm{KNN}} \log \| f(s_t) - f(s_j) \| \quad f \in \textrm{random or ICM }$   & Yes \\
    ProtoRL~\cite{proto}     & Data & $\sum_{j\in \textrm{KNN}} \log \| f(s_t) - f(s_j) \| \quad f \in \textrm{prototypes}$     & Yes  \\
     CIC~\cite{cic}  & Data \tablefootnote{The newest NeurIPS version of CIC \url{https://openreview.net/forum?id=9HBbWAsZxFt} has two designs of intrinsic reward including the NCE term and KNN reward, which represent competence-base and data-based designs respectively. Since CIC obtains the best performance in URLB with KNN reward only and NCE is used to update representation, we use KNN reward as its intrinsic reward and consider it as a data-based method in this paper.}	& $ \sum_{j\in \textrm{KNN}} \log \| f(s_t,s'_t) - f(s_j,s'_j) \| \quad f \in \textrm{contrastive} $	& Yes \\     
   \midrule 
     SMM~\cite{lee2019smm}     & Competence       &  $\log p^*(s) -\log q_z(s)  - \log p(z) + \log d(z|s)$ & Yes \\
    DIAYN~\cite{diayn} & Competence  & $  \log q(z|s) + \log p(z)$   & No  \\
    APS~\cite{liu2021aps}    & Competence & $r^{\text{APT}}_{t}(s) + \log q(s|z)$   & Yes
    \\

    BeCL (\textbf{Ours})  &   Competence      & $\exp(f(s^{(1)}_t)^{\top}f(s^{(2)}_t)/\kappa ) / \sum_{s_j \sim S^{-}\bigcup s_t^{(2)}}\exp (f(s_j)^{\top}f(s^{(1)}_t)/\kappa $ & No \\
    \bottomrule
  \end{tabular}}
\end{table}

\begin{figure}[h]
\begin{center}
\centerline{
\includegraphics[width=0.8\columnwidth]{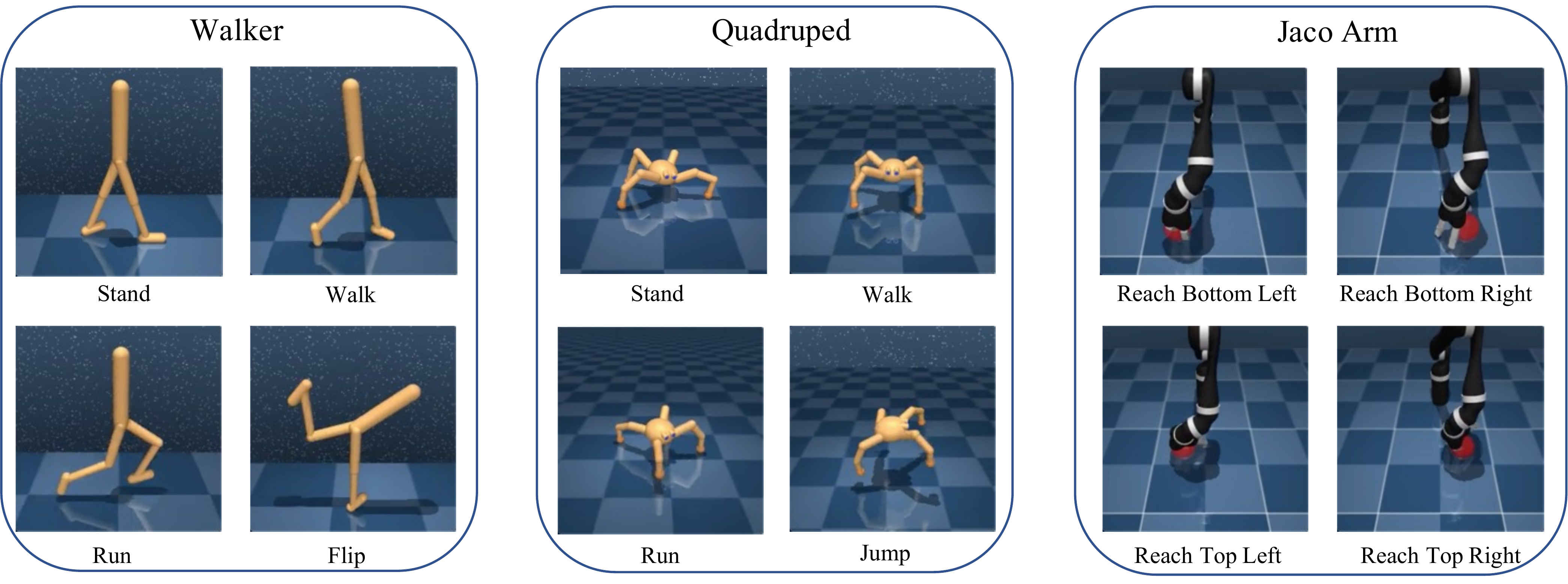}}
\caption{Introduction of domains and their downstream tasks in URLB \cite{URLB}. There are four different domains and each domain has four different downstream tasks. The environment is based on DMC \cite{dmc}. The episode lengths for the Walker and Quadruped domains are set to 1000, and the episode length for the Joco domain is set to 250, which results in the maximum episodic reward for the Walker and Quadruped domains being 1000, and for Jaco Arm being 250.}
\label{fig:intro_env_urlb}
\end{center}
\end{figure}

\subsection{Practical Implementation}\label{app-alg}

We evaluate the adaptation efficiency of BeCL following the pretraining and finetuning procedures in URLB \cite{URLB}. Specifically, in the pretraining stage, latent skill $z$ is changed and sampled from a discrete distribution $p(z)$ in every fixed step and the agent interacts with the environments based on $\pi_\theta(a|s,z)$. The encoder network $f(\cdot)$ will be updated after 4k steps of pretraining. We use 5 MLP to construct the encoder network and compute the optimization loss by Eq.~\eqref{eq:nce-tau}. The replay buffer has the same capacity on all baselines. We sample a mini-batch from the replay buffer $(s, s', a, z)$, and pick samples with the same skill as a positive pair, and consider those samples with different skills as negative pairs to compute contrastive loss and intrinsic reward. We then update the critic $Q_{\psi}$ by minimizing the Bellman residual, as
\begin{equation}
    \label{eq:critic}
    \mathcal{L}_Q(\psi,\mathcal{B})=\mathbb{E}_{(s_t, a_t, r_t, s_{t+1}, z_t)\sim\mathcal{B}}\left[\Big(Q_\psi(s_t,z_t,a_t) - r_t - \gamma Q_{\bar{\psi}}(s_{t+1},z_t,\pi_\theta(s_{t+1}, z_t)\Big)^2\right].
\end{equation}
where $\bar{\psi}$ is an exponential moving average (EMA) of the critic weights $\psi$, and $r_t$ is an intrinsic or extrinsic reward depending on the training stages. We train the actor $\pi_\theta(s_t,z_t)$ by maximizing the expected returns, as
\begin{equation}
    \label{eq:actor}
    \mathcal{L}_\pi(\theta, \mathcal{B})=-\mathbb{E}_{(s_t,z_t) \sim\mathcal{B}}\left[Q_\psi(s_t,z_t,\pi_\theta(s_t,z_t))\right].
\end{equation}
In the finetuning stage, a skill is randomly sampled and keep fixed in all steps. The actor and critic are updated by extrinsic reward after first 4000 steps. We give algorithmic descriptions of the pretraining and finetuning stages in Algorithm~\ref{app:pretrain_algo} and Algorithm~\ref{app:finetune_algo}, respectively. In our experiment, pretraining one seed of BeCL for 2M steps takes about 18 hours while fine-tuning to downstream tasks for 100k steps takes about 30 minutes with a A100 GPU.

\newpage
\begin{algorithm}[h]
\caption{BeCL: Unsupervised pretraining}\label{app:pretrain_algo}
\begin{algorithmic}
  \STATE {\bfseries Input:} number of pretraining frames $N_{PT}$, skill dimension $|z|$, batch size $N$,  and skill sampling frequency $N_{update}$.
  \STATE \textbf{Initialize} the environment, random actor $\pi_\theta(a | s,z)$, critic $Q_\psi(s,z, a)$, contrastive encoder $f$, and replay buffer $\mathcal{B}$
  \FOR{$t=1$ {\bfseries to} $N_{PT}$} 

    \STATE Randomly choose $z_t$ \text{from category distribution} $p(z) \in \RR^{|z|}$ every $N_{update}$ steps.
    
    \STATE Interact with environment : $\tau_{z_t}, \tau_{z_t}' \sim \pi_\theta(a | s, z_t), ~p(s'|s, a)$.
    \STATE Store $\tau_{z_t},\tau_{z_t}'$ into buffer $\mathcal{B}$.
    \IF{$t \geq 4,000$}
    \STATE Sample a batch from $\mathcal{B}$ : $\{(\mathbf{a}_i^{(1)}, \mathbf{s}_i^{(1)}, \mathbf{s}_i^{\prime(1)}, \mathbf{z}_i),(\mathbf{a}_i^{(2)},\mathbf{s}_i^{(2)}, \mathbf{s}^{\prime(2)}_i, \mathbf{z}_i)\}_{i=1}^{N / 2} \sim \{\tau_Z\}$.
    \STATE Update the contrastive encoder $f$ using contrastive loss in Eq.~\eqref{eq:nce-tau}.
    \STATE Compute the intrinsic reward $r^{\text{int}}$ with Eq.~\eqref{eq:reward}.
    \STATE Update actor $\pi_\theta(a | s,z)$ and critic $Q_\psi(s,z, a)$ by Eq.~\eqref{eq:critic} and Eq.~\eqref{eq:actor} using intrinsic reward $r^{\text{int}}$.
    \ENDIF
  \ENDFOR
\end{algorithmic}
\end{algorithm}
\vspace{-1em}
\begin{algorithm}[h]
   \caption{BeCL: Finetuning with extrinsic rewards} \label{app:finetune_algo}
\begin{algorithmic}
  \STATE {\bfseries Input: }actor $\pi_\theta(a | s,z^\star)$ and critic $Q_\psi([s,z^\star], a)$ with weights from pretraining phase, randomly sampled $z^\star$ from $p(z)$, and number of finetuning frames $N_{FT}$ batch size $N$. Initialized environment and an empty replay buffer $\mathcal{D}$.
  % \STATE Initialize environment $p(s^{\prime}| s, a)$,  and skill $z_t$
  \FOR{$t=1$ {\bfseries to} $N_{FT}$} 

    \STATE Choose the action by $a_t \sim \pi_\theta(a | s_t,z^\star)$.
    \STATE Interact with environment to obtain $s_{t+1}, r_t$ with extrinsic reward from downstream task.
    \STATE Store $(s_{t}, a_t, s_{t+1}, r_{t}, z^\star)$ into buffer $\mathcal{D}$.
    \IF{$t \geq 4,000$}
    \STATE Sample a batch $\{(\mathbf{a}^{(i)}, \mathbf{s}^{(i)}, \mathbf{s}^{\prime(i)}, \mathbf{r}^{(i)}, \mathbf{z}^{(i)})\}_{i=1}^{N}$ from the replay buffer $\mathcal{D}$.
    \STATE Update actor $\pi_\theta(a | s,z^\star)$ and critic $Q_\psi([s,z^\star], a)$ using extrinsic reward $r$ in Eq.~\eqref{eq:critic} and Eq.~\eqref{eq:actor}.
    \ENDIF
  \ENDFOR
\end{algorithmic}
\end{algorithm}

\subsection{Visualization of Behaviors in Competence-based Methods}
\label{app-vis-skill-dmc}

\begin{figure*}[h!]
\begin{center}
\centerline{
\includegraphics[width=0.8\columnwidth]{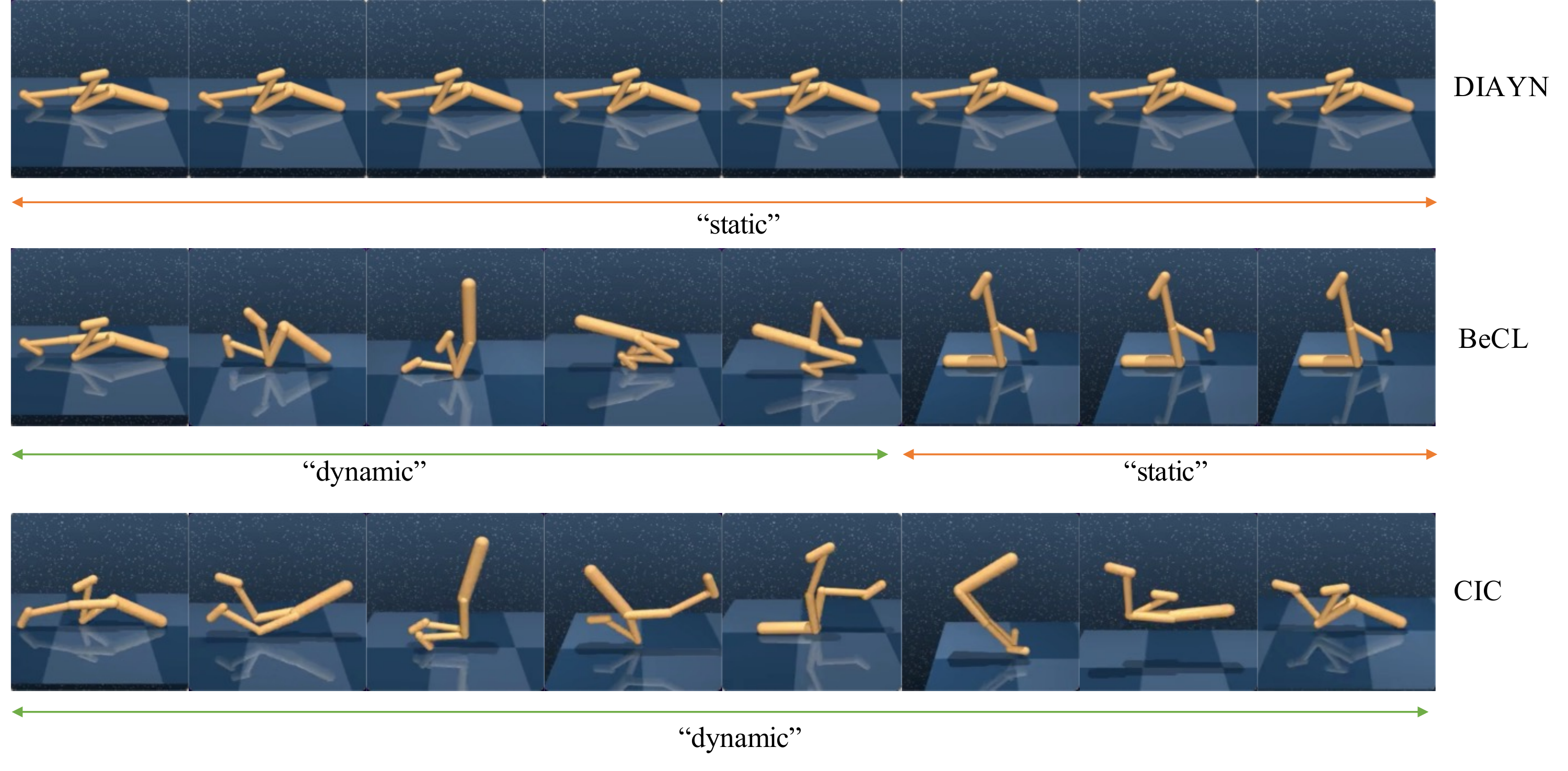}}
% \vspace{-1em}
\caption{Qualitative visualization for the behavior of different competence-based algorithms on \textit{Walker} domain from URLB. From left to right, the figures show the snapshots of behaviors from the three algorithms within an episode. We find that DIAYN polices produce constantly static and non-trivial poses, while CIC policies can produce dynamic and trivial behaviors, which is consistent with the observations of previous work \cite{cic, URLB}. In contrast, BeCL combines dynamic and static behaviors during an episode, which can also be use for good adaptation in downstream tasks.}
\label{fig:comparison_dmc_behavior}
\end{center}
\vspace{-1em}
\end{figure*}

\newpage
\subsection{Numerical Results}
We represent the individual normalized return of different methods in state-based URLB after 2M steps of pretraining and 100k steps of finetuning, as shown in Table~\ref{table:numerical_result}. In the \textit{Quadruped} domain, BeCL obtains state-of-the-art performance in downstream tasks. In the \textit{Walker} and \textit{Jaco} domains, BeCL shows competitive performance against the leading baselines.

\begin{table}[h]
\centering
\caption{Results of BeCL and other baselines on state-based URLB. All baselines are pretrained for 2M steps with only intrinsic rewards in each domain, and then finetuned to 100K steps in each downstream task by giving the extrinsic rewards. All baselines are run for 10 seeds per task, and the code and hyperparameters are given in URLB \cite{URLB}. The highest performing scores are highlighted.}
\vspace{0.5em}
\label{table:numerical_result}
\resizebox{\columnwidth}{!}{
\begin{tabular}{ccccccccccccc}
\toprule
% \multicolumn{13}{c}{Pretrainining for 2M frames in URLB}                         \\ \hline
Domain                     & \multicolumn{1}{c|}{Task}               & \multicolumn{1}{c|}{DDPG}   & ICM        & Disagreement & \multicolumn{1}{c|}{RND}        & APT        & \multicolumn{1}{c|}{ProtoRL}    & SMM        & DIAYN      & APS        & \multicolumn{1}{c|}{CIC}        & BeCL       \\ \hline
\multirow{4}{*}{Walker}    & \multicolumn{1}{c|}{Flip}               & \multicolumn{1}{c|}{538±27} & 390$\pm$10 & 332$\pm$7    & \multicolumn{1}{c|}{506$\pm$29} & 606$\pm$30 & \multicolumn{1}{c|}{549$\pm$21} & 500$\pm$28 & 361$\pm$10 & 448$\pm$36 & \multicolumn{1}{c|}{\textbf{641$\pm$26}} & \textbf{611$\pm$18} \\
                           & \multicolumn{1}{c|}{Run}                & \multicolumn{1}{c|}{325±25} & 267$\pm$23 & 243$\pm$14   & \multicolumn{1}{c|}{403$\pm$16} & 384$\pm$31 & \multicolumn{1}{c|}{370$\pm$22} & 395$\pm$18 & 184$\pm$23 & 176$\pm$18 & \multicolumn{1}{c|}{\textbf{450$\pm$19}} & 387$\pm$22 \\
                           & \multicolumn{1}{c|}{Stand}              & \multicolumn{1}{c|}{899±23} & 836$\pm$34 & 760$\pm$24   & \multicolumn{1}{c|}{901$\pm$19} & 921$\pm$15  & \multicolumn{1}{c|}{896$\pm$20} & 886$\pm$18 & 789$\pm$48 & 702$\pm$67 & \multicolumn{1}{c|}{\textbf{959$\pm$2}}  & \textbf{952$\pm$2}  \\
                           & \multicolumn{1}{c|}{Walk}               & \multicolumn{1}{c|}{748±47} & 696$\pm$46 & 606$\pm$51   & \multicolumn{1}{c|}{783$\pm$35} & 784$\pm$52 & \multicolumn{1}{c|}{836$\pm$25} & 792$\pm$42 & 450$\pm$37 & 547$\pm$38 & \multicolumn{1}{c|}{\textbf{903$\pm$21}} & \textbf{883$\pm$34} \\ \hline
\multirow{4}{*}{Quadruped} & \multicolumn{1}{c|}{Jump}               & \multicolumn{1}{c|}{236±48} & 205$\pm$47 & 510$\pm$28   & \multicolumn{1}{c|}{626$\pm$23} & 416$\pm$54 & \multicolumn{1}{c|}{573$\pm$40} & 167$\pm$30 & 498$\pm$45 & 389$\pm$72 & \multicolumn{1}{c|}{565$\pm$44} & \textbf{727$\pm$15} \\
                           & \multicolumn{1}{c|}{Run}                & \multicolumn{1}{c|}{157±31} & 125$\pm$32 & 357$\pm$24   & \multicolumn{1}{c|}{439$\pm$7}  & 303$\pm$30 & \multicolumn{1}{c|}{324$\pm$26} & 142$\pm$28 & 347$\pm$47 & 201$\pm$40 & \multicolumn{1}{c|}{445$\pm$36} & \textbf{535$\pm$13} \\
                           & \multicolumn{1}{c|}{Stand}              & \multicolumn{1}{c|}{392±73} & 260$\pm$45 & 579$\pm$64   & \multicolumn{1}{c|}{\textbf{839$\pm$25}} & 582$\pm$67 & \multicolumn{1}{c|}{625$\pm$76} & 266$\pm$48 & 718$\pm$81 & 435$\pm$68 & \multicolumn{1}{c|}{700$\pm$55} & \textbf{875$\pm$33} \\
                           & \multicolumn{1}{c|}{Walk}               & \multicolumn{1}{c|}{229±57} & 153$\pm$42 & 386$\pm$51   & \multicolumn{1}{c|}{517$\pm$41} & 582$\pm$67 & \multicolumn{1}{c|}{494$\pm$64} & 154$\pm$36 & 506$\pm$66 & 385$\pm$76 & \multicolumn{1}{c|}{621$\pm$69} & \textbf{743$\pm$68} \\ \hline
\multirow{4}{*}{Jaco}      & \multicolumn{1}{c|}{Reach bottom left}  & \multicolumn{1}{c|}{72±22}  & 88$\pm$14  & 117$\pm$9    & \multicolumn{1}{c|}{102$\pm$9}  & 143$\pm$12 & \multicolumn{1}{c|}{118$\pm$7}  & 45$\pm$7   & 20$\pm$5   & 84$\pm$5   & \multicolumn{1}{c|}{\textbf{154$\pm$6}}  & \textbf{148$\pm$13} \\
                           & \multicolumn{1}{c|}{Reach bottom right} & \multicolumn{1}{c|}{117±18} & 99$\pm$8   & 122$\pm$5    & \multicolumn{1}{c|}{110$\pm$7}  & \textbf{138$\pm$15} & \multicolumn{1}{c|}{\textbf{138$\pm$8}}  & 60$\pm$4   & 17$\pm$5   & 94$\pm$8   & \multicolumn{1}{c|}{\textbf{149$\pm$4}} & \textbf{139$\pm$14} \\
                           & \multicolumn{1}{c|}{Reach top left}     & \multicolumn{1}{c|}{116±22} & 80$\pm$13  & 121$\pm$14   & \multicolumn{1}{c|}{88$\pm$13}  & 137$\pm$20 & \multicolumn{1}{c|}{134$\pm$7}  & 39$\pm$5   & 12$\pm$5   & 74$\pm$10  & \multicolumn{1}{c|}{\textbf{149$\pm$10}} & 125$\pm$10 \\
                           & \multicolumn{1}{c|}{Reach top right}    & \multicolumn{1}{c|}{94±18}  & 106$\pm$14 & 128$\pm$11   & \multicolumn{1}{c|}{99$\pm$5}   & \textbf{170$\pm$7} & \multicolumn{1}{c|}{140$\pm$9}  & 32$\pm$4   & 21$\pm$3   & 83$\pm$11  & \multicolumn{1}{c|}{163$\pm$9}  & 126$\pm$10 \\ \hline           
\end{tabular}}
\end{table}

\subsection{Evaluation of Different Skills in Finetuning}

\begin{figure}[h!]
\begin{center}
\centerline{
\includegraphics[width=0.9\columnwidth]{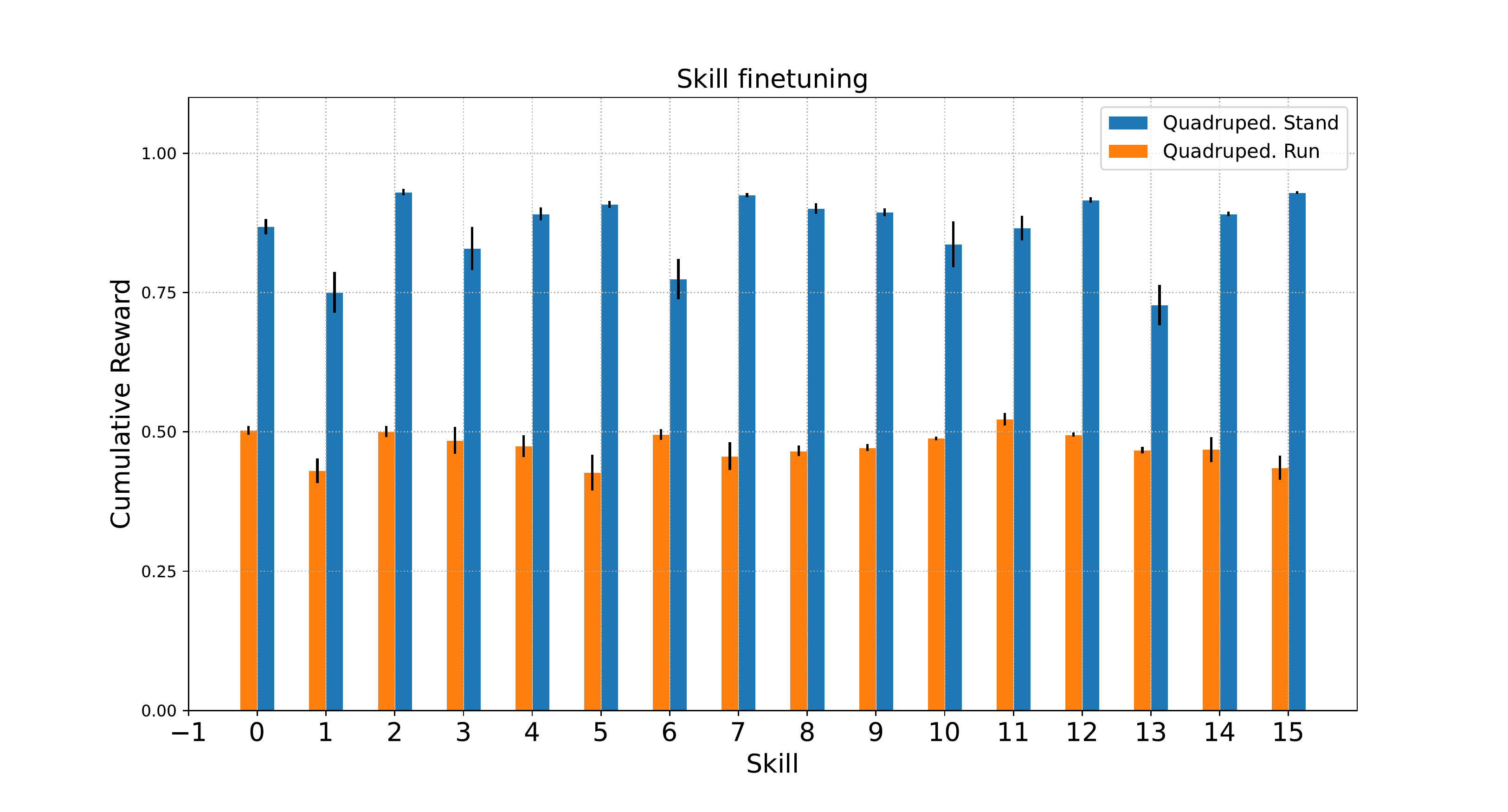}}
\caption{The adaptation efficiency of difference skills in the finetuning stage. The skill is a one-hot vector sampled from a 16-dimensional discrete distribution in the pretraining stage. Then we sample each skill for finetuning in \textit{(Quadruped, Run)} and \textit{(Quadruped, Stand)} tasks. We find that some skills do not always obtain a well generalization performance in downstream tasks (e.g. skill 1,6,13 in stand task). In BeCL, we uniformly choosing skill in the finetuning stage to evaluate the average adaptation performance among skills, although it would be better to choosing the best skills through a grid search like CIC \cite{cic}. We believe more matching meta-controller or other effective finetuning methods should be considered in future works.}
\label{fig:ablation_finetune_skills}
\end{center}
\end{figure}

\newpage
\subsection{The Impact of Skill Dimension in Adaptation}
\begin{figure}[h]
\begin{center}

\centerline{
\includegraphics[width=0.78\columnwidth]{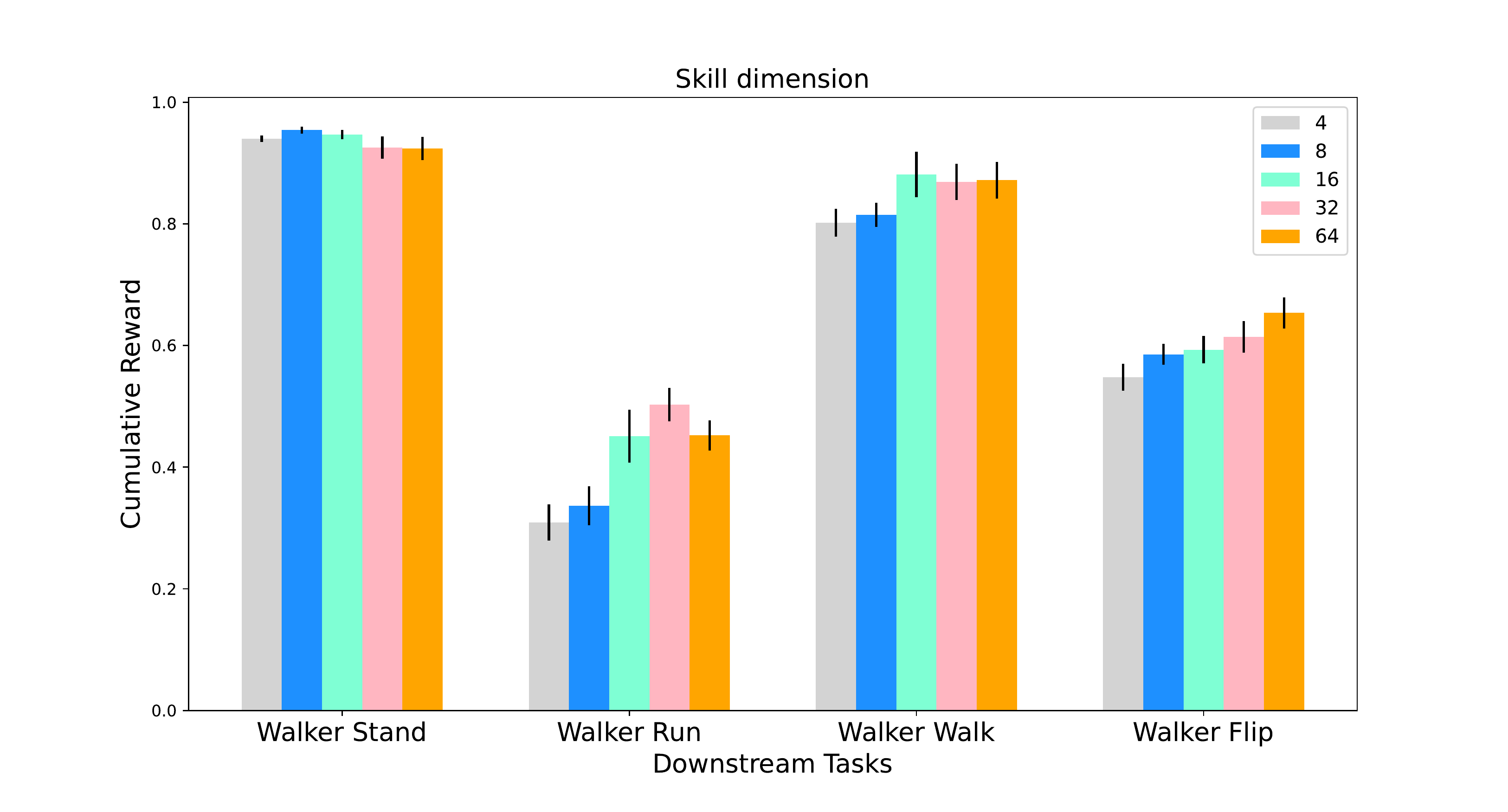}}
\caption{Illustration of the adaptation efficiency of different skill dimensions in URLB. We pretrain the policies with 4, 8, 16, 32, 64-dimensional discrete skills respectively. We compare their average normalize reward after 100K steps of finetuning. The results show that increasing the skill dimension in BeCL can benefit the adaptation efficiency in some hard downstream tasks (e.g. \textit{(Walker, Run)} and \textit{(Walker, Flip)} that need more complicated skills). Meanwhile, we find BeCL performs similar with different skill dimensions in easy downstream tasks like \textit{(Walker, Stand)}, where a 4-dimensional skill space can also perform well. In all URLB tasks, we use a 16-dimensional discrete skill distribution as in DIAYN \cite{diayn} for a fair comparison.}
\label{fig:ablation_study_urlb}
\end{center}
\end{figure}

\subsection{The Initial Results of Pixels-based URLB}

\begin{figure}[h!]
\begin{center}
\centerline{
\includegraphics[width=1.\columnwidth]{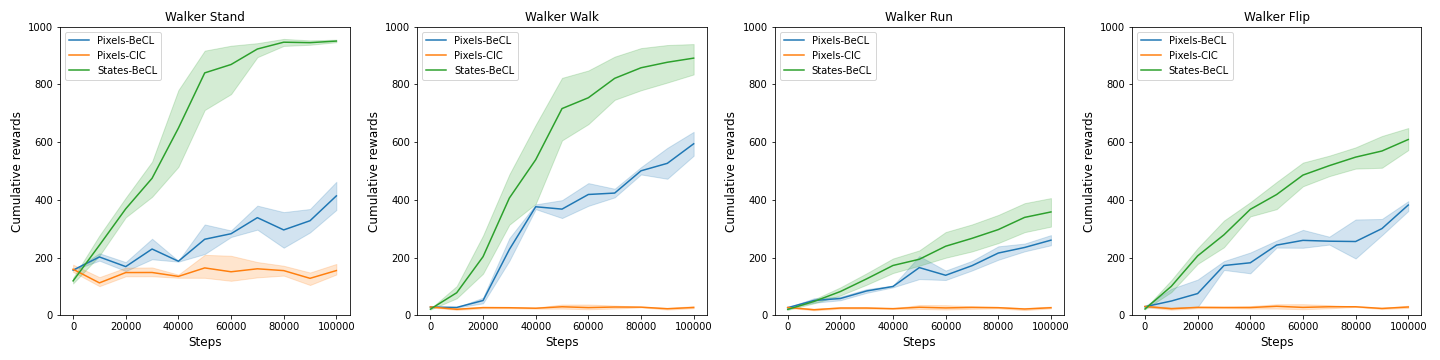}}
\caption{we provide the initial results of the performances of BeCL in pixels-based walker in URLB. To extend BeCL to image-based tasks, we add PIEG \cite{pieg} encoder to encode the observation ahead our policy and reward in the pretraining and finetuning stage. The result shows that the performance of BeCL in image-based tasks outperforms image-based CIC \cite{cic} while still underperforms that in state-based tasks.}
\label{fig:pixels-based walker}
\end{center}
\end{figure}

%%%%%%%%%%%%%%%%%%%%%%%%%%%%%%%%%%%%%%%%%%%%%%%%%%%%%%%%%%%%%%%%%%%%%%%%%%%%%%%
%%%%%%%%%%%%%%%%%%%%%%%%%%%%%%%%%%%%%%%%%%%%%%%%%%%%%%%%%%%%%%%%%%%%%%%%%%%%%%%

\end{document}